\title{Imbalances in Neurosymbolic Learning: Characterization and Mitigating Strategies}
\author{%
    Efthymia Tsamoura $\dagger$\thanks{Work started before Efthymia Tsamoura joined Huawei Labs.} \\
   Huawei Labs \\
   \texttt{efthymia.tsamoura@huawei.com} \\
     \And
   Kaifu Wang $\dagger$\\
   University of Pennsylvania \\
   \texttt{kaifu@sas.upenn.edu} \\
    \And
   Dan Roth \\
   University of Pennsylvania \\
   \texttt{danroth@seas.upenn.edu}
}
\begin{document}

\maketitle
\def\thefootnote{$\dagger$}\footnotetext{These authors contributed equally to this work.}\def\thefootnote{\arabic{footnote}}

\begin{abstract}
We study one of the most popular problems in \emph{neurosymbolic learning} (NSL), that of learning neural classifiers given only the result of applying a symbolic component $\sigma$ to the gold labels of the elements of a vector $\_x$. The gold labels of the elements in $\_x$ are unknown to the learner. We make multiple contributions, theoretical and practical, to address a problem that has not been studied so far in this context, that of characterizing and mitigating \textit{learning imbalances}, i.e., major differences in the errors that occur when classifying instances of different classes (aka \emph{class-specific risks}). Our theoretical analysis reveals a unique phenomenon: that $\sigma$ can greatly impact learning imbalances. This result sharply contrasts with previous research on supervised and weakly supervised learning, which only studies learning imbalances under data imbalances. On the practical side, we introduce a technique for estimating the marginal of the hidden gold labels using weakly supervised data. Then, we introduce algorithms that mitigate imbalances at training and testing time by treating the marginal of the hidden labels as a constraint. We demonstrate the effectiveness of our techniques using strong baselines from NSL and long-tailed learning, suggesting performance improvements of up to 14\%.
\end{abstract}
\section{Introduction}\label{section:introduction}
The need to address the limitations of deep learning motivated researchers to explore \emph{neurosymbolic learning} (NSL) \citep{feldstein2024mappingneurosymbolicailandscape}, a family of techniques that integrate neural mechanisms for inference and learning with symbolic ones. This work considers one of the most popular NSL learning settings \cite{ABL,neurolog,scallop,iclr2023,reasoning-shortcuts,bears,deepproblog-journal} in which a neural classifier $f$ is learned assuming access only to a vector of inputs $\_x=(x_1,\dots,x_M)$ to $f$ and to the result of applying $\sigma$ to the gold labels of the $x_i$s. The gold labels are hidden during learning.  
An example is illustrated below:

\begin{example}[Example adapted from \citep{DBLP:journals/corr/abs-1907-08194}] \label{example:motivating}
    We aim to learn an MNIST classifier $f$, using only samples of the form ${(x_1,x_2,s)}$, where $x_1$ and $x_2$ are MNIST digits and $s$ is the maximum of their gold labels, i.e., ${s = \sigma(y_1,y_2) = \max\{y_1,y_2\}}$ with $y_i$ being the label of $x_i$.
    The gold labels are hidden during training. 
    We will refer to the $y_i$'s and $s$ as \emph{hidden} and \emph{weak} labels, respectively.  
\end{example}

Our learning setting, which we will refer to as \nesy, 
has been extensively adopted in NLP \cite{relaxed-supervision,pmlr-v48-raghunathan16,spigot1,word-problem2,paired-examples}. Recently, \nesy~has been successfully adopted to fine-tune language models \cite{zhang-etal-2023-improved,scallop-llms}, align video to text \cite{laser}, perform visual question answering \cite{scallop}, and learn knowledge graph embeddings \cite{deepsoftlog,maene2025embeddings}.   
The wide range of applications of \nesy~motivated its extensive study  \cite{NeurIPS2023,bears,reasoning-shortcuts,iclr2023}.

\begin{wrapfigure}{R}{0.4\textwidth}
\centering
\vspace{-1.1\intextsep}
\hspace*{-.15\columnsep}
\begin{minipage}{0.4\textwidth}
        \scalebox{0.55}{
        {\scalefont{1.3}
\begin{tikzpicture}
\begin{axis}[
    xlabel={MNIST classes},
    ylabel={Per-class accuracy},
    xmin=0, xmax=9,
    ymin=0, ymax=100,
    xtick={0,1,2,3,4,5,6,7,8,9},
    xticklabels={0,1,2,3,4,5,6,7,8,9},   %
    ytick={0,20,...,100}
            ]
\addplot[mark=*,blue] plot coordinates {
(0,0)
(1,0)
(2,81)
(3,0)
(4,0)
(5,0)
(6,0)
(7,50)
(8,12)
(9,71)
};

\addplot[color=red,mark=x]
    plot coordinates {
(0,0)
(1,0)
(2,83)
(3,0)
(4,7)
(5,0)
(6,4)
(7,95)
(8,84)
(9,93)
    };

\addplot[color=green,mark=x]
    plot coordinates {
    (0,0)
(1,0)
(2,88)
(3,1)
(4,30)
(5,49)
(6,16)
(7,94)
(8,88)
(9,92)
    };
\end{axis}
\end{tikzpicture}
}
        }
        \caption{Class-specific accuracies of classifier $f$  (Example~\ref{example:motivating}). Blue, red, and green curves show accuracy at 20, 40 and 100 epochs. Learning converges in 100 epochs.}\label{fig:classification-accuracy}
\vspace{-0.45cm}
\end{minipage}
\end{wrapfigure}

We, for the first time, study an unexplored topic in the context of \nesy: the characterization and mitigation of \textit{ learning imbalances}, i.e., the major differences in errors occurring when classifying instances of different classes (aka \emph{class-specific risks}). 
Existing work on supervised \citep{LA, Label-Distribution-Aware-Margin-Loss} and weakly supervised learning \citep{solar,long-tailed-pll-dynamic-rebalancing} studies imbalances under the prism of \textit{long-tailed} (aka \textit{imbalanced}) data: data in which instances of different classes occur with very different frequencies, \citep{Learning-from-Imbalanced-Data,devil-in-tails,systematic-study-imbalance}.  
However, these results cannot fully characterize learning imbalances in \nesy. This is because the \textit{symbolic component $\sigma$ may cause learning imbalances even when the hidden or the weak labels are uniformly distributed}.  
Figure~\ref{fig:classification-accuracy} demonstrates this phenomenon by showing the accuracy of the classification per class at different training epochs when an MNIST classifier is trained as in Example~\ref{example:motivating} and the hidden labels are uniform. 
Hence, to formalize the imbalances in \nesy, we need to account for the symbolic component $\sigma$.

On the practical side, mitigating learning imbalances (a problem typically referred to as \textit{long-tailed learning}) 
has received considerable attention in supervised and weakly supervised learning with the proposed techniques operating at training \citep{Label-Distribution-Aware-Margin-Loss,tan2020equalization,tan2021eqlv2,10.5555/1622407.1622416,systematic-study-imbalance}
or at testing time \citep{DBLP:conf/iclr/KangXRYGFK20,ot-pll,LA}. 
However, these previous algorithms are not appropriate for \nesy. First, they rely on (good) approximations of the marginal distribution of the hidden labels. Although approximating $\_r$ may be easy in supervised learning \citep{LA} since the gold labels are available, in our setting the gold labels are hidden. 
Second, the state-of-the-art for training time mitigation \citep{solar,Label-Distribution-Aware-Margin-Loss,tan2020equalization,tan2021eqlv2,10.5555/1622407.1622416,systematic-study-imbalance,long-tailed-pll-dynamic-rebalancing} is designed for settings in which a single instance is presented each time to the learner and hence, they cannot take into account the correlations among the instances.

\textbf{Contributions.} We first provide class-specific error bounds in the context of \nesy. Complementary to previous work in supervised learning \citep{Label-Distribution-Aware-Margin-Loss} and weakly supervised one \citep{cour2011}, our theory shows that $\sigma$ can significantly affect learning imbalances, see Theorem \ref{thm:imbalance}. 
Our analysis extends the theoretical analysis in \citep{NeurIPS2023} -- by providing stricter error bounds and making fewer assumptions -- and the theoretical analysis in \citep{cour2011}.   

We then propose a statistically consistent technique for estimating the marginals of the hidden labels given weak labels and two algorithms to mitigate imbalances during training and testing time. 
The first algorithm assigns pseudolabels to training data based on a novel linear programming formulation of \nesy, see Section~\ref{section:ilp}.  
The second algorithm uses the marginals of the hidden labels to constrain the model’s predictions on test data using robust semi-constrained optimal transport \citep{robust-ot}, see Section~\ref{section:carot}.
Our empirical analysis shows that our techniques can improve the accuracy over strong baselines in NSL \citep{pmlr-v80-xu18h,NeurIPS2023} and long-tailed learning \citep{LA,long-tailed-pll-dynamic-rebalancing} by up to 14\% and that the straightforward application of previous state-of-the-art to \nesy~is impossible \citep{solar} or problematic \citep{long-tailed-pll-dynamic-rebalancing}. 

Proofs, additional backgrounds and details on our empirical analysis are in the appendix.
The source code to run our empirical analysis are available at \url{https://github.com/tsamoura/imbalances-nsl}. 
\vspace{-0.3cm}
\section{Preliminaries}\label{section:preliminaries}
Our notation is summarized in Table~\ref{table:notation:prelims-theory} and~\ref{table:notation-algorithms} and builds on \citep{NeurIPS2023,iclr2023,neurolog}.

\textbf{Data and models.}
For an integer $n \ge 1$, let ${[n] := \{1,\dots,n\}}$. 
Let also $\mathcal{X}$ be the instance space and $\mathcal{Y} = [c]$ be the output space.
We use $x,y$ to denote elements in $\+X$ and $\+Y$.
The distribution of two random variables 
$X,Y$ over $\mathcal{X} \times \mathcal{Y}$ is denoted by $\+D$, and $\+D_{X}$, $\+D_{Y}$ denote the marginals of $X$ and $Y$.
The vector $\mathbf{r} = (r_1,\dots, r_c)$ denotes $\+D_{Y}$, where $r_j := \mathbb{P}(Y = j)$
is the probability (or ratio) label $j \in \+Y$ occurs in $\+D$.
We consider \emph{scoring functions} 
$f$ that given instances from $\+X$ output softmax probabilities (or \emph{scores}). 
We use ${f^j(x)}$ to denote the score of ${f}$ for class $j \in \+Y$. 
A scoring function $f$ induces a \emph{classifier} $[f]: \mathcal{X} \rightarrow \mathcal{Y}$, whose \emph{prediction} on $x$ is given by $ \argmax_{j \in [c]} f^j(x)$. 
We denote by $\mathcal{F}$ the set of scoring functions and by 
$[\mathcal{F}]$ the set of classifiers. 
The \textit{zero-one risk} $R(f)$ of $f$ is the probability $f$ misclassifies an input instance. 
The \textit{class-specific} of $f$ for class $j$ is the probability 
$f$ misclassifies an instance of that class, i.e., ${R_j(f):= \-P([f](x) \ne j|Y = j)}$. 

\textbf{Neurosymbolic learning.} 
\mychange{
We align with the notation from \cite{ABL,neurolog,iclr2023} and assume that each \nesy~training sample is of the form ${(\_x, s)}$, where ${\_x = (x_1,\dots, x_M)}$ is a vector of instances in $\+X^M$ and $s \in \+S$ is the result of applying the symbolic component $\sigma$ over the hidden gold labels ${\_y = (y_1,\dots, y_M)}$ of the elements of $\_x$. We assume that $\sigma$ is known to the learner, similarly to \citep{scallop,iclr2023,reasoning-shortcuts,bears}.
As first notated in \cite{neurolog}, using abduction \cite{Kakas2017}, the symbolic component $\sigma$ can be seen as a function from ${\+Y^M}$ to ${\+S}$. We refer to ${\+S = \{a_1,\dots, a_{c_S}\}}$, where ${|\+S|=c_S \geq 1}$, as the \emph{space of weak labels} and to an element from ${\+S}$ as a \emph{weak label}. We denote the set of all label vectors that map to $s$ under $\sigma$ by $\sigma^{-1}(s)$. Each vector in $\sigma^{-1}(s)$ may be the gold vector of labels. Returning to Example~\ref{example:motivating}, ${\sigma^{-1}(s=1) = \{(0,1), (1,0), (1,1)\}}$. We refer to each vector in $\sigma^{-1}(s)$ as a \emph{pre-image}. The distribution of samples $(\_x, s)$ is denoted by $\+D_\:P$. We denote a set of $m_\:P$ \nesy~samples by $\mathcal{T}_\:P$. 
We set $[f](\_x):=([f](x_1), \dots, [f](x_M))$. The \emph{zero-one partial loss} is defined as ${L_{\sigma}(\_y,s) := L(\sigma(\_y),s) = \-1\{\sigma(\_y) \ne s\}}$, for any ${\_y \in \+Y^M}$ and ${s \in \+S}$. We aim to find the classifier $f$ with the minimum \emph{zero-one partial risk} given by $R_\:P(f;\sigma) := \-E_{(X_1,\dots,X_M,S) \sim \+D_\:P}[L_{\sigma}(([f](\_X)),S)]$. 
}

Relevant NSL work \cite{deepproblog-journal,scallop,reasoning-shortcuts} may denote training samples differently. However, this notation is equivalent with ours, see Appendix~\ref{appendix:prelims}.
Furthermore, our definition of \nesy~aligns with that of \emph{multi-instance partial label learning} (MI-PLL) without assuming that the $\+X$ instances in $\+D_{\:P}$ are i.i.d. As discussed in \cite{NeurIPS2023}, \emph{partial label learning} (PLL) \citep{cour2011,structured-prediction-pll,weighthed-loss-pll}, where each training instance is associated with a set of mutually exclusive candidate labels, is a special case of \nesy, see Appendix~\ref{appendix:RW}.

\textbf{Vectors and matrices.} 
A vector $\_v$ is \emph{diagonal} if all of its elements are equal.
We denote by $\mathbf{e}_i$ the one-hot vector, where the $i$-th element equals 1.
We denote the all-one and the all-zero vectors by $\mathbf{1}_n$ and $\mathbf{0}_n$, and the identity matrix of size $n \times n$ by $\mathbf{I}_n$.  
Let ${\_A \in \-R^{n \times m}}$ be a matrix.
We use $A_{i,j}$ to denote the value of the ${(i,j)}$ cell of $\_A$ and $v_i$ to denote the $i$-th element of $\_v$. 
The \textit{vectorization} of $\_A$ is given by
${\mathrm{vec}(\_A) := [a_{1,1}, \ldots, a_{n,1}, \ldots, a_{1,m}, \ldots, a_{n, m}]^\top}
$ and its \textit{Moore–Penrose inverse} by $\_A^\dagger$.
If $\_A$ is square, then the diagonal matrix that shares the same diagonal with $\_A$ is denoted by $D(\_A)$.
For matrices $\_A$ and $\_B$, ${\_A \otimes \_B}$ and ${\langle \_A,\_B \rangle}$ denote their \textit{Kronecker} and \textit{Frobenius inner products}. 

\section{Theory: Characterizing Learning Imbalances In \nesy}
\label{section:theory}

We provide error bounds that measure the difficulty of learning instances of each class in $\+Y$ using \nesy~data. The bounds indicate that, unlike supervised learning, \textit{learning imbalances in \nesy~arise not only from imbalances in the hidden or weak label distributions, but also from the symbolic component $\sigma$}.  
Our analysis is based on the assumption that the $\+X$ instances in $\+D_{\:P}$ are i.i.d.
To simplify the presentation, our analysis focuses on $M=2$.
However, it can be generalized to $M>2$. 

\mychange{
Our theory is based on a novel nonlinear program formulation that allows us to compute an upper bound of each $R_j(f)$.
The {first key idea} (\textsc{K1}) to that formulation is a rewriting of $R_\:P(f;\sigma)$ and $R_j(f)$. 
To start with, given the function $\sigma$, 
the zero-one partial risk can be expressed as 
\vspace{0.5em}
\begin{equation}
\label{eqn:quadratic-form}
    \begin{aligned}
        R_\:P(f;\sigma)
        = \sum_{(i, j) \in \+Y^2}  
        \eqnmarkbox[olive]{true}{r_i r_j} 
        \left( 
            \sum_{(i', {j'}) \in \+Y^2} 
            \eqnmarkbox[blue]{wrong}{\-1\{\sigma(i, j) \ne \sigma({i'}, {j'})\}} 
            \eqnmarkbox[purple]{prob}{\_H_{ii'}(f)\_H_{jj'}(f)}
        \right)
    \end{aligned}
\end{equation}
\annotate[yshift=0.55em]{above,left}{true}{probability of the label pair $(i,j)$}
\annotate[yshift=0.55em]{above}{wrong}{the weak label is misclassified}
\annotate[yshift=-0.85em]{below, left}{prob}{conditional probability that the labels $i$ and $j$ are (mis)classified as $i'$ and $j'$}
\vspace{0.5em}

where $\_H(f)$ is an $c \times c$ matrix defined as ${\_H(f) := [\-P([f](x) = j | Y = i)]_{i \in [c], j \in [c]}}$. 
To derive \eqref{eqn:quadratic-form}, we enumerate all the 4-ary vectors $(i,j,{i'},{j'}) \in \mathcal{Y}^4$, where $i,j$ are the gold hidden labels and ${i'},{j'}$ are the predicted labels, so that the predicted labels lead to a wrong weak label, i.e., $\sigma(i,j) \neq \sigma(i',j')$.
The risk $R_\:P(f;\sigma)$ is the sum of the probabilities of those wrong predictions, with $H_{ii'}(f)H_{jj'}(f)$ encoding the probability of occurrence of the vectors $(i,j,{i'},{j'})$. 
}
Now, let $\_h(f)$ be the vectorization of $\_H(f)$.
The partial risk ${R_\:P(f;\sigma)}$ in \eqref{eqn:quadratic-form} is a quadratic form of $\_h(f)$. Therefore, there is a unique symmetric matrix $\_\Sigma_{\sigma,\_r}$ in $\-R^{c^2 \times c^2}$ that depends only on $\sigma$ and $\_r$ such that
\eqref{eqn:quadratic-form} can be rewritten as 
${R_\:P(f;\sigma) = \_h(f)^\top\_\Sigma_{\sigma,\_r}\_h(f)}$. 
Furthermore, for each $j \in \+Y$, let ${\_W_j}$ be the matrix defined by ${(\_1_c - \_e_j) \_e_j^\top}$, and ${\_w_j}$ be its vectorization.
We can rewrite the {class-specific risk} as ${R_j(f) = \_w_j^\top \_h(f)}$.

The {second key idea} (\textsc{K2}) to form a nonlinear program that computes class-specific risk bounds is to upper bound the class-specific risk $R_j(f)$ of a model $f$ with the model's partial risk $R_\:P(f;\sigma)$. The latter can be minimized with \nesy~training data $\+T_\:P$. 
Putting (\textsc{K1}) and (\textsc{K2}) together, the worst class-specific risk of $f$ for class $j \in \+Y$ is given by the optimal solution to the program below:
\begin{equation}
    \label{eqn:optim}
    \begin{aligned}
        \max_\_h \quad          & \_w_j ^\top \_h(f) \\ 
        \mathrm{s.t.} \quad   &  \_h(f)^\top \_\Sigma_{\sigma,\_r} \_h(f) = R_\:P(f;\sigma) \quad & \text{(partial risk)}\\ 
                            & \_h(f) \ge 0 \quad & \text{(positivity)}\\ 
                            & (\_I_c \otimes \_1_c^\top)\_h(f) = \_1_c \quad & \text{(normalization)}
    \end{aligned}
\end{equation}
Let us analyze \eqref{eqn:optim}. 
The optimization objective states that our aim is to find the worst possible class-specific risk, expressed as ${R_j(f) = \_w_j^\top \_h(f)}$. 
The first constraint specifies the partial risk of the model. 
The second one requires the (mis)classification probabilities to be nonnegative. 
The last constraint, where $(\_I_c \otimes \_1_c^\top)\_h(f)$ represents the row sums of matrix $\_H(f)$, enforces the classification probabilities to sum to one.
Let ${\Phi_{\sigma, j}({R}_\:P(f;\sigma))}$ denote the optimal solution of program \eqref{eqn:optim}. 
We have:

\begin{restatable}[Class-specific risk bound]{proposition}{bound}
    \label{thm:imbalance}
    For any $j \in \+Y$, we have that $R_j(f) \le \Phi_{\sigma, j}({R}_\:P(f;\sigma))$.
\end{restatable}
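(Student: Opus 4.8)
The plan is to prove the bound by a pure feasibility argument. Because $\Phi_{\sigma,j}(R_\:P(f;\sigma))$ is \emph{defined} as the maximum of the objective $\_w_j^\top\_h$ over the feasible set of program~\eqref{eqn:optim}, it suffices to exhibit a single feasible point whose objective value equals $R_j(f)$; the natural candidate is the vector $\_h(f) = \mathrm{vec}(\_H(f))$ actually induced by $f$. So the proof reduces to (a) confirming the two rewritings used to set up the program and (b) checking that $\_h(f)$ satisfies all three constraints. No concentration inequalities or hypothesis-class arguments enter; everything is algebraic, and the only probabilistic fact used is that the two instances in a bag are drawn i.i.d.\ from $\+D$.

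\textbf{Step 1 (the identities \textsc{K1}).} First I would derive \eqref{eqn:quadratic-form}. Writing $R_\:P(f;\sigma) = \-E[\-1\{\sigma([f](X_1),[f](X_2)) \ne S\}]$ and conditioning on the gold label pair $(Y_1,Y_2) = (i,j)$ — which occurs with probability $r_i r_j$ by the i.i.d.\ draw — the conditional probability that the predicted pair equals $(i',j')$ factors as $\_H_{ii'}(f)\,\_H_{jj'}(f)$, since $[f](X_1)$ and $[f](X_2)$ depend only on $X_1 \mid Y_1$ and $X_2 \mid Y_2$, which are independent; summing $\-1\{\sigma(i,j)\ne\sigma(i',j')\}$ against these weights gives \eqref{eqn:quadratic-form}. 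Then for \eqref{eqn:class-specific-risk} I would unfold $R_j(f) = \-P([f](X)\ne j \mid Y=j) = \sum_{k\ne j}\_H_{jk}(f)$ and identify this sum with $\langle\_W_j,\_H(f)\rangle = \_w_j^\top\_h(f)$, taking care that $\_W_j$ is the $0/1$ matrix selecting exactly the off-diagonal entries of row $j$ of $\_H(f)$.

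\textbf{Step 2 (extracting $\_\Sigma_{\sigma,\_r}$ and verifying feasibility).} The right-hand side of \eqref{eqn:quadratic-form} is a homogeneous quadratic form in the coordinates of $\_h(f)$; collecting the coefficient of each monomial and symmetrizing yields the unique symmetric $\_\Sigma_{\sigma,\_r}\in\-R^{c^2\times c^2}$ with $R_\:P(f;\sigma) = \_h(f)^\top\_\Sigma_{\sigma,\_r}\_h(f)$, uniqueness being the standard fact that a quadratic form has a unique symmetric Gram matrix. It then remains to check that $\_h(f)$ is feasible for \eqref{eqn:optim}: the partial-risk constraint holds by the very construction of $\_\Sigma_{\sigma,\_r}$; positivity holds because each entry of $\_H(f)$ is a conditional probability; and the normalization constraint holds because every row of $\_H(f)$ is a distribution over $\+Y$ and $(\_I_c\otimes\_1_c^\top)\_h(f)$ reads off exactly those row sums, each equal to $1$. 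Combining feasibility with Step~1, $R_j(f) = \_w_j^\top\_h(f) \le \Phi_{\sigma,j}(R_\:P(f;\sigma))$.

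\textbf{Expected obstacle.} There is no conceptual hurdle; the risk is purely in the bookkeeping. The step most prone to error is keeping the $\mathrm{vec}$/Kronecker conventions mutually consistent across $\_\Sigma_{\sigma,\_r}$, $\_w_j$, and $\_I_c\otimes\_1_c^\top$, so that all three act on the same coordinate ordering of $\_h(f)$ and, in particular, $(\_I_c\otimes\_1_c^\top)\_h(f)$ genuinely produces the \emph{row} sums of $\_H(f)$ and $\_w_j^\top\_h(f)$ genuinely produces $\sum_{k\ne j}\_H_{jk}(f)$. One should also be explicit that the factorization in Step~1 uses only the i.i.d.\ bag assumption, so the bound — unlike the analysis of \citep{NeurIPS2023} — needs no further structure on $\+D_\:P$. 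Finally, it is worth noting (though irrelevant to the statement) that the bound is generally not tight: \eqref{eqn:optim} ranges over all $\_h$ meeting its linear and quadratic constraints, not only those of the form $\mathrm{vec}(\_H(f'))$ for an actual scoring function $f'$.
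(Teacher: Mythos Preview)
Your proposal is correct and matches the paper's approach exactly: the paper's own proof is a one-liner stating that the result ``directly follows from the definition of the program~\eqref{eqn:optim},'' i.e., the same feasibility argument you spell out. You have simply made explicit the bookkeeping (the derivations of \eqref{eqn:quadratic-form} and \eqref{eqn:class-specific-risk} and the verification that $\_h(f)$ satisfies all three constraints) that the paper leaves implicit.
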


\textbf{Characterizing learning imbalances.}
Proposition~\ref{thm:imbalance} suggests that the worst risk associated with each class in $\+Y$ is characterized by two factors: (i) the model's partial risk $R_\:P(f;\sigma)$, which is independent of the specific class and (ii) $\sigma$, since $\sigma$ affects the mapping $\Phi_{\sigma,j}$ from the model's partial risk to the class-specific risk.
Therefore, the learning imbalance can be assessed by comparing the growth rates of $\Phi_{\sigma,j}$.
We use this approach to analyze Example~\ref{example:motivating}.

\begin{wrapfigure}{R}{0.7\textwidth}
\centering
\vspace{-\intextsep}
\hspace*{-.85\columnsep}
    \begin{minipage}{0.35\textwidth}
        \centering
        \includegraphics[width=\linewidth]{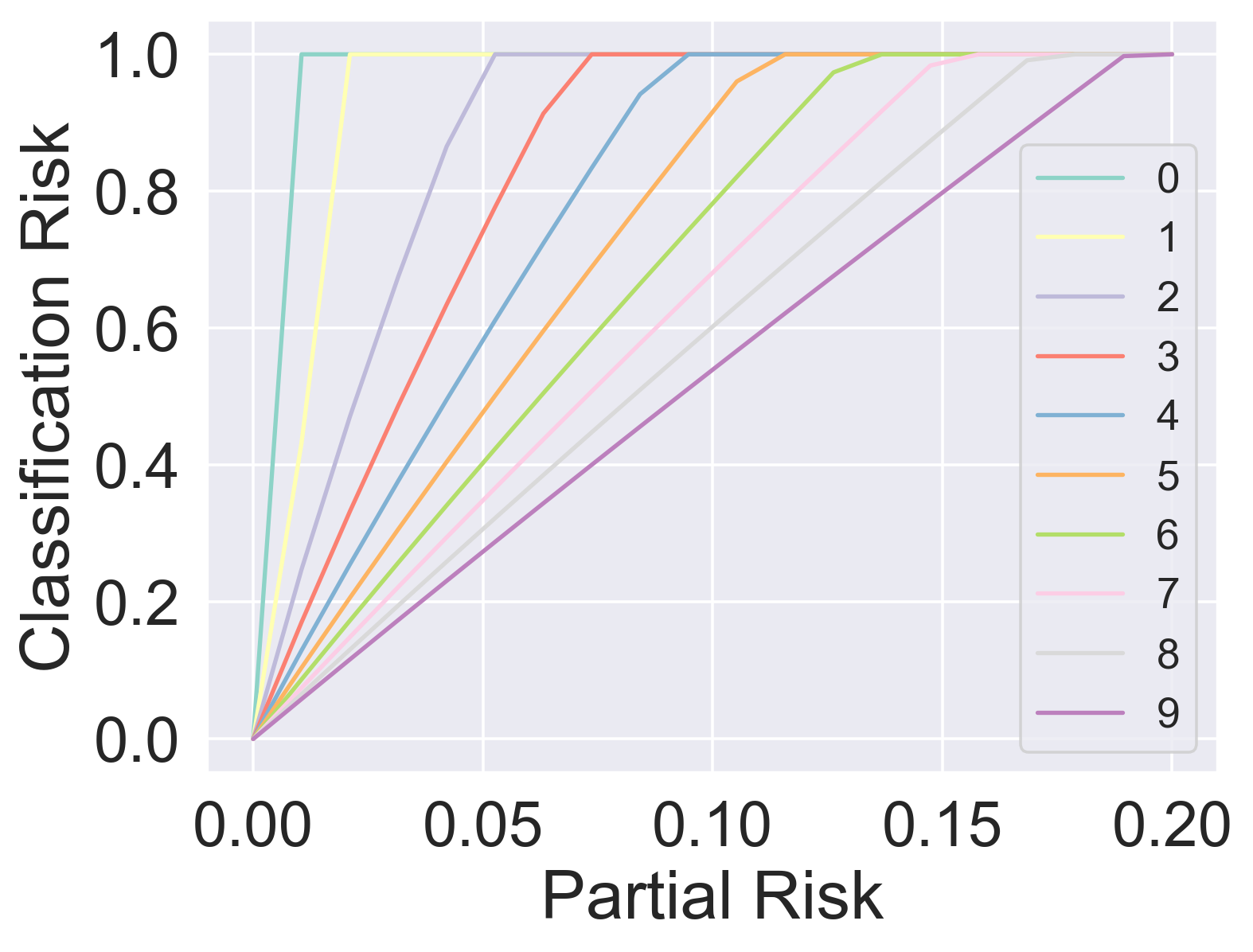}
      \end{minipage}\hfill
      \begin{minipage}{0.35\textwidth}
        \centering
        \includegraphics[width=\linewidth]{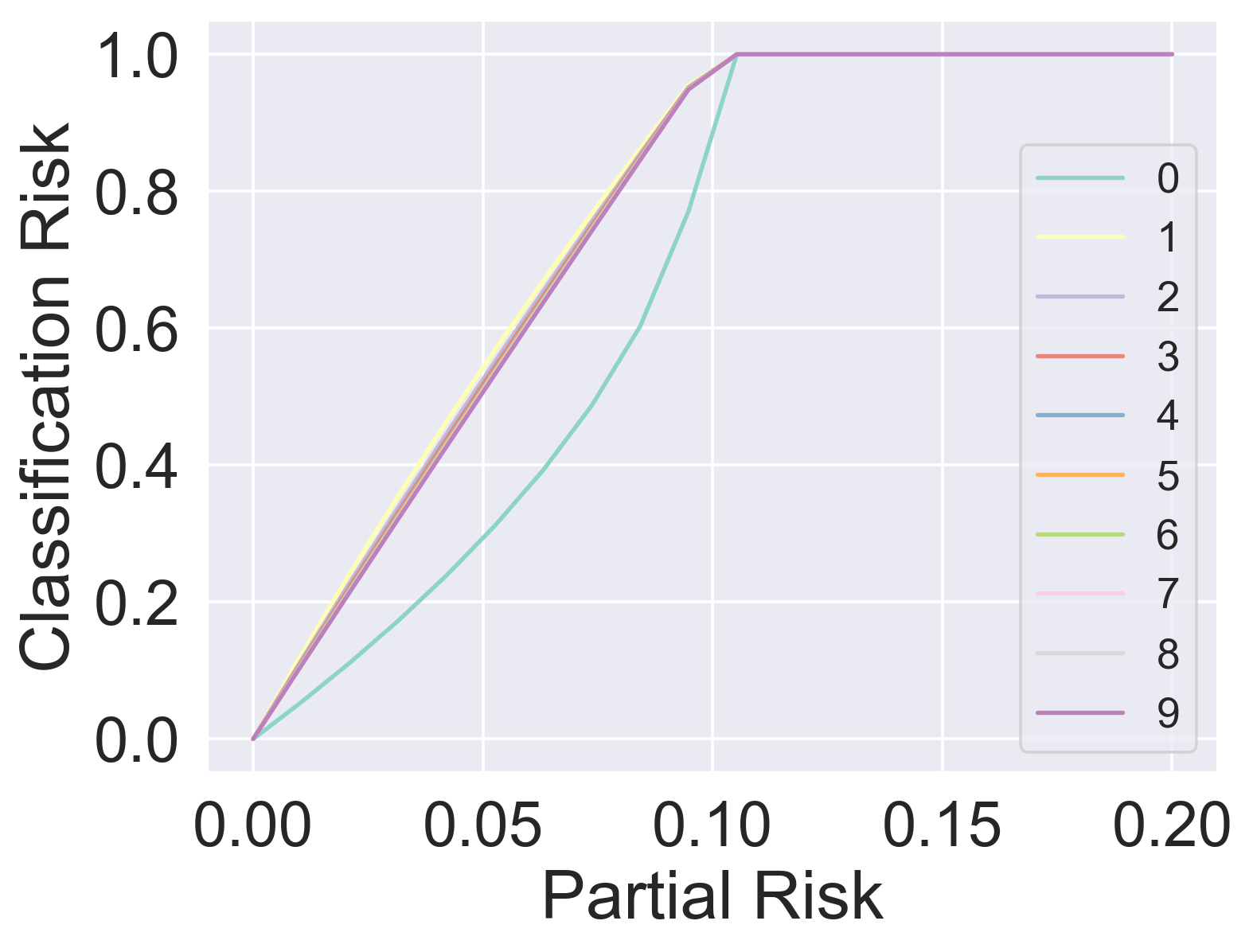}
      \end{minipage}
    \vspace{-0.2cm}
    \caption{Class-specific upper bounds obtained via \eqref{eqn:optim}. (left) ${\+D_{Y}}$ is uniform.
    (right) ${{\+D_{\:P_S}}}$ is uniform.}
    \label{Fig:bounds}
    \vspace{-0.5cm}
\end{wrapfigure}

\begin{example}[Cont' Example~\ref{example:motivating}]
\label{example:motivating-cont}
    Let $\+D$ and $\+D_\:P$ be defined as in Section~\ref{section:preliminaries}.
    Consider the two cases:
    \begin{itemize}
        \item[\textsc{Case 1}]
        The marginal of the hidden label $Y$ is uniform.
        The left-hand side of Figure~\ref{Fig:bounds} shows the risk bounds for different classes obtained by solving the program \eqref{eqn:optim}. 
        The bounds are presented as functions of the different values of ${R}_\:P(f;\sigma)$.
        In this plot, the curve for class ``zero'' (resp. ``nine'') has the steepest (resp. smoothest) slope, suggesting that $f$ will tend to make more (resp. fewer) mistakes when classifying instances of that class. In other words, class ``zero" is the hardest to learn, as also shown to be the case in reality, see Figure~\ref{fig:classification-accuracy}. 

        \item[\textsc{Case 2}]
        The marginal of the weak label $S$ is uniform.
        Similarly, the right-hand side of Figure~\ref{Fig:bounds} plots the corresponding risk bounds, suggesting that the class ``zero" is now the easiest to learn.
    \end{itemize}
\end{example}

\textbf{Computable bounds.}
Via Proposition~\ref{thm:imbalance}, we can derive a bound for $R_j(f)$ that can be computed using a \nesy~dataset. This can be done by using standard learning theory tools (e.g., the VC-dimension or the Rademacher complexity) to show that, given a fixed confidence level $\delta \in (0,1)$, the partial risk $R_\:P(f;\sigma)$ will not exceed a \emph{generalization bound} $\widetilde{R}_\:P(f; \sigma, \+T_\:P, \delta)$ with probability $1-\delta$:

\begin{restatable}{proposition}{genBound}
    \label{prop:gen-bound}
    Let $d_{[\+F]}$ be the Natarajan dimension of $[\+F]$.
    Given a confidence level $\delta \in (0,1)$, we have that $R_j(f) \le \Phi_{\sigma, j}(\widetilde{R}_\:P(f; \sigma, \+T_\:P, \delta))$ with probability $1-\delta$ for any $j \in [c]$, where
    \begin{equation}
        \widetilde{R}_\:P(f; \sigma, \+T_\:P, \delta)
        = \widehat{R}_\:P(f; \sigma, \+T_\:P)
        + \sqrt{\frac{2\log(\e m_\:P/2d_{[\+F]} \log(6Mc^2d_{[\+F]} / \e)  )}{m_\:P/2d_{[\+F]} \log(6Mc^2d_{[\+F]} / \e) }}
        + \sqrt{\frac{\log(1/\delta)}{2m_\:P}} \label{eq:gen-bound}
    \end{equation}
\end{restatable}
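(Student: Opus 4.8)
The plan is to derive \eqref{eq:gen-bound} by composing Proposition~\ref{thm:imbalance} with a uniform‑convergence bound on the partial risk. Two ingredients are needed: (i) $\Phi_{\sigma,j}$ is non‑decreasing in its argument, so an upper bound on $R_\:P(f;\sigma)$ can be pushed through it; and (ii) with probability at least $1-\delta$, $R_\:P(f;\sigma) \le \widetilde{R}_\:P(f;\sigma,\+T_\:P,\delta)$ for the (data‑dependent) scoring function $f$ at hand. For (i), observe that relaxing the quadratic equality $\_h(f)^\top\_\Sigma_{\sigma,\_r}\_h(f) = R_\:P(f;\sigma)$ in \eqref{eqn:optim} to ``$\le R_\:P(f;\sigma)$'' keeps $\_h(f)$ feasible and can only increase the optimum, so Proposition~\ref{thm:imbalance} still holds for this relaxed program, whose feasible set — and hence optimal value — grows monotonically with the right‑hand side; I will therefore work with this monotone version of $\Phi_{\sigma,j}$ (and, to recover the equality‑constrained $\Phi$ of \eqref{eqn:optim} verbatim, one either argues the relaxation is tight or reads $\Phi_{\sigma,j}$ as its non‑decreasing envelope). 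Granting (i) and (ii), on the $1-\delta$ event we obtain, simultaneously for every $j\in[c]$, $R_j(f) \le \Phi_{\sigma,j}(R_\:P(f;\sigma)) \le \Phi_{\sigma,j}(\widetilde{R}_\:P(f;\sigma,\+T_\:P,\delta))$, which is the claim; since the $1-\delta$ event in (ii) does not depend on $j$, no union bound over classes is needed.

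It remains to prove (ii). The key observation is that $R_\:P(f;\sigma) = \-E_{(\_X,S)\sim\+D_\:P}[\-1\{\sigma([f](\_X))\ne S\}]$ is the population mean of the binary‑valued map $(\_x,s)\mapsto\-1\{\sigma([f](x_1),\dots,[f](x_M))\ne s\}$, and $\widehat{R}_\:P(f;\sigma,\+T_\:P)$ its empirical mean over the i.i.d. sample $\+T_\:P$ of size $m_\:P$ drawn from $\+D_\:P$; hence it suffices to prove a one‑sided uniform‑convergence bound over the class $\+L := \{(\_x,s)\mapsto\-1\{\sigma([f](\_x))\ne s\} : f\in\+F\}$. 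I will control the growth function of $\+L$ by a composition argument: the loss vector of any member of $\+L$ on $m$ samples is a deterministic, $\sigma$‑dependent function of the $[f]$‑labels of the at most $Mm$ underlying instances appearing in those samples, so $\Pi_{\+L}(m) \le \Pi_{[\+F]}(Mm) \le (Mm\,c^2)^{d_{[\+F]}}$ by the Natarajan lemma (up to the usual constants). Since $\+L$ is binary‑valued, its VC dimension $d_{\+L}$ satisfies $2^{d_{\+L}} \le \Pi_{\+L}(d_{\+L}) \le (M d_{\+L}\,c^2)^{d_{[\+F]}}$, and inverting this transcendental inequality yields $d_{\+L} \le 2 d_{[\+F]}\log(6Mc^2 d_{[\+F]}/\e)$. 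Plugging this estimate into a standard VC uniform‑convergence theorem — whose first term bounds the expected uniform deviation of the empirical mean from the population mean over $\+L$ by $\sqrt{2 d_{\+L}\log(\e m_\:P/d_{\+L})/m_\:P}$ via symmetrization and the growth function, and whose second term adds $\sqrt{\log(1/\delta)/2m_\:P}$ via a McDiarmid bounded‑differences argument — gives exactly \eqref{eq:gen-bound}.

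The argument is insensitive to $M$: for $M>2$ the only change is that ``$2$'' becomes ``$M$'' in the instance count $Mm$, which is already what \eqref{eq:gen-bound} records. I expect the main obstacle to be two pieces of bookkeeping rather than anything conceptual: first, pinning down the constants in the Natarajan‑lemma growth estimate and in the inversion of $2^{d_{\+L}} \le (M d_{\+L} c^2)^{d_{[\+F]}}$ so that the VC‑dimension bound comes out as precisely the $2 d_{[\+F]}\log(6Mc^2 d_{[\+F]}/\e)$ appearing inside \eqref{eq:gen-bound}; and second, cleanly justifying the monotonicity of $\Phi_{\sigma,j}$ in the form stated in \eqref{eqn:optim} (which is why I would phrase and use the relaxed, manifestly monotone version).
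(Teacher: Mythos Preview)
Your proposal is correct and follows essentially the same route as the paper: bound the VC dimension of the partial-loss class $L_\sigma\circ[\+F]$ by Natarajan's lemma applied to the $Md$ underlying instances, invert $2^{d}\le (Md)^{d_{[\+F]}}c^{2d_{[\+F]}}$ via the linearization $\log d \le d/(6d_{[\+F]}) + \log(6d_{[\+F]}) - 1$ to obtain $d\le 2d_{[\+F]}\log(6Mc^2d_{[\+F]}/\e)$, and plug into the standard VC generalization bound. You are in fact more careful than the paper on one point: the paper's proof simply establishes $R_\:P(f;\sigma)\le\widetilde{R}_\:P$ and writes ``as claimed'' without addressing the monotonicity of $\Phi_{\sigma,j}$ needed to push the bound through; your relaxation of the equality constraint in \eqref{eqn:optim} to an inequality is exactly the right way to make this step rigorous.
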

The first term on the right-hand side of \eqref{eq:gen-bound}
denotes the empirical partial risk of classifier $f$,
the second term upper bounds the Natarajan dimension of $f$ \citep{understanding-ml}, and the third term quantifies the confidence level or the probability that the generalization bound holds, which is typical in learning theory. 
The Proposition~\ref{prop:gen-bound} shows the speed of decrease of the risk of $f$ for class $j \in \+Y$ when using \nesy~samples for training. 
{Further on our bounds and Example~\ref{example:motivating-cont} are in Appendix~\ref{appendix:bounds-discussion}.}

\textbf{Comparison to previous work.}
Our result extends \citep{NeurIPS2023} (see Section~\ref{section:preliminaries} for a discussion about MI-PLL and \nesy) in three ways: (i) we bound the risks $R_j(f)$ instead of bounding the total risk $R(f)$; (ii) our bounds do not rely on $M$-unambiguity, in contrast to those in \citep{NeurIPS2023};
and (iii) the program \eqref{eqn:optim} leads to tighter bounds for $R(f)$. Before proving (iii), let us first recapitulate
$M$-unambiguity \citep{NeurIPS2023}, where 
a function $\sigma$ is \emph{$M$-unambiguous} if for any two diagonal label vectors $\_y$ and $\_y'$ $\in \+Y^M$ such that $\_y \ne \_y'$, we have that ${\sigma(\_y') \neq \sigma(\_y)}$.
Now, let us move to point (iii). 
By relaxing the constraints in \eqref{eqn:optim}, we can recover Lemma 1 from \citep{NeurIPS2023} (which is the key to proving Theorem 1 from \citep{NeurIPS2023}). 
In particular, if we: (i) drop the
positivity and normalization constraints from \eqref{eqn:optim} and (ii)
replace the partial risk constraint by the more relaxed inequality 
${\_h(f)^\top D(\_\Sigma_{\sigma,\_r}) \_h(f) \le 
{R}_\:P(f; \sigma)
}$,
we obtain the following: 
\begin{restatable}{proposition}{mUnambBound}
    \label{corolllary:bound}
    If $\sigma$ is $M$-unambiguous, we have
    \begin{equation}
    \label{eqn:previous}
    \begin{aligned}
        R(f)
        \le \sqrt{{\_w^\top(D(\_\Sigma_{\sigma,\_r}))^{\dagger}\_w R_\:P}(f;\sigma)}
        = \sqrt{{c(c-1) {R}_\:P}(f;\sigma)}
    \end{aligned}
    \end{equation}
    which coincides with Lemma 1 from \citep{NeurIPS2023} for $M=2$, where $\_w := \sum_{j=1}^c r_j\_w_j$. 
\end{restatable}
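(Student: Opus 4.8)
The plan is to carry out the relaxation of program~\eqref{eqn:optim} described just above the statement and then solve the relaxed program in closed form. First I would isolate the two identities that drive the argument. By~\eqref{eqn:class-specific-risk} and the definition $\_w := \sum_{j=1}^c r_j \_w_j$, the zero-one risk decomposes as $R(f) = \sum_{j \in \+Y} r_j R_j(f) = \sum_{j \in \+Y} r_j \_w_j^\top \_h(f) = \_w^\top \_h(f)$; and by~\eqref{eqn:quadratic-form}, reading off the coefficient of $\_H_{ii'}(f)\_H_{jj'}(f)$ and symmetrizing, every entry of $\_\Sigma_{\sigma,\_r}$ is a product $r_i r_j$ times a $0/1$ indicator and hence nonnegative, while $R_\:P(f;\sigma) = \_h(f)^\top \_\Sigma_{\sigma,\_r} \_h(f)$. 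Consequently $R(f)$ is at most the value of $\max_{\_h}\{\_w^\top\_h : \_h \ge 0,\ (\_I_c \otimes \_1_c^\top)\_h = \_1_c,\ \_h^\top\_\Sigma_{\sigma,\_r}\_h = R_\:P(f;\sigma)\}$, and dropping the positivity and normalization constraints and replacing the last equality by the inequality $\_h^\top D(\_\Sigma_{\sigma,\_r})\_h \le R_\:P(f;\sigma)$ can only enlarge the feasible set --- the latter because any feasible $\_h$ of the original program has $\_h \ge 0$ and $\_\Sigma_{\sigma,\_r}$ has nonnegative off-diagonal entries, so $\_h^\top D(\_\Sigma_{\sigma,\_r})\_h \le \_h^\top\_\Sigma_{\sigma,\_r}\_h = R_\:P(f;\sigma)$.

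Second, I would bound the relaxed program. It maximizes the linear functional $\_w^\top\_h$ over a (possibly degenerate) ellipsoid $\{\_h : \_h^\top D(\_\Sigma_{\sigma,\_r})\_h \le R_\:P(f;\sigma)\}$, so a weighted Cauchy--Schwarz inequality gives $\_w^\top\_h \le \sqrt{\_w^\top (D(\_\Sigma_{\sigma,\_r}))^{\dagger}\_w}\cdot\sqrt{\_h^\top D(\_\Sigma_{\sigma,\_r})\_h} \le \sqrt{\_w^\top (D(\_\Sigma_{\sigma,\_r}))^{\dagger}\_w\, R_\:P(f;\sigma)}$ --- but this requires the support of $\_w$ to be contained in the support of $D(\_\Sigma_{\sigma,\_r})$ (else $\_w$ has mass on a coordinate on which $D(\_\Sigma_{\sigma,\_r})$ vanishes, the relaxed program is unbounded, and the inequality is vacuous).

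The crux --- and the only place $M$-unambiguity is used --- is verifying this support condition and evaluating $\_w^\top (D(\_\Sigma_{\sigma,\_r}))^{\dagger}\_w = c(c-1)$. Indexing $\_h(f) = \mathrm{vec}(\_H(f))$ by pairs $(i,i')$ with $i$ the gold and $i'$ the predicted label, the diagonal entry of $\_\Sigma_{\sigma,\_r}$ at $(i,i')$ is $r_i^2\,\-1\{\sigma(i,i)\ne\sigma(i',i')\}$ (set $(i,j,i',j')=(i,i,i',i')$ in~\eqref{eqn:quadratic-form}); applying Definition~\ref{definition:M-unambiguity} to the diagonal vectors $(i,i)$ and $(i',i')$, this equals $r_i^2$ when $i\ne i'$ and $0$ when $i=i'$. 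On the other hand, reading $R_j(f)=\_w_j^\top\_h(f)$ off~\eqref{eqn:class-specific-risk} shows $\_w_j$ collects the off-diagonal entries of the $j$-th row of $\_H(f)$, so $\_w=\sum_j r_j\_w_j$ has entry $r_i$ at each pair $(i,i')$ with $i\ne i'$ and $0$ elsewhere. The two supports coincide, $(D(\_\Sigma_{\sigma,\_r}))^{\dagger}$ has entry $r_i^{-2}$ at each $(i,i')$ with $i\ne i'$, and hence $\_w^\top (D(\_\Sigma_{\sigma,\_r}))^{\dagger}\_w = \sum_{i\ne i'} r_i^2\cdot r_i^{-2} = c(c-1)$. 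Chaining the three steps yields $R(f)\le\sqrt{c(c-1)\,R_\:P(f;\sigma)}$, i.e.\ \eqref{eqn:previous}; matching it to Lemma~1 of \citep{NeurIPS2023} at $M=2$ is then a direct comparison of closed forms.

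I expect the main obstacle to be bookkeeping rather than any deep idea: correctly extracting $\_\Sigma_{\sigma,\_r}$ and its diagonal from~\eqref{eqn:quadratic-form} under the chosen vectorization, and handling the rank deficiency of $D(\_\Sigma_{\sigma,\_r})$ carefully (the weighted Cauchy--Schwarz must be applied on its support, and the identity $\_w^\top (D(\_\Sigma_{\sigma,\_r}))^{\dagger}\_w=c(c-1)$ tacitly assumes $r_i>0$ for all $i\in\+Y$, otherwise $c$ should be the number of classes with positive mass). It is also worth noting explicitly that $M$-unambiguity is genuinely necessary here: without it some diagonal indicator $\-1\{\sigma(i,i)\ne\sigma(i',i')\}$ can vanish for $i\ne i'$, the support condition fails, and the relaxed program --- hence the resulting bound --- becomes vacuous, which is precisely why the positivity and normalization constraints in~\eqref{eqn:optim} are needed to say anything for general $\sigma$.
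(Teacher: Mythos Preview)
Your proposal is correct and follows essentially the same route as the paper: relax program~\eqref{eqn:optim} by dropping positivity and normalization and replacing the equality by $\_h^\top D(\_\Sigma_{\sigma,\_r})\_h \le R_\:P$, use $M$-unambiguity to verify that $\_w$ lies in the range of $D(\_\Sigma_{\sigma,\_r})$, and then compute $\_w^\top(D(\_\Sigma_{\sigma,\_r}))^\dagger\_w=c(c-1)$. The one substantive difference is the tool you use to solve the relaxed problem: you apply a weighted Cauchy--Schwarz inequality on the support of $D(\_\Sigma_{\sigma,\_r})$, whereas the paper writes down the Lagrangian dual $g(\lambda)=\lambda R_\:P + \tfrac{1}{4\lambda}\_w^\top(D(\_\Sigma_{\sigma,\_r}))^\dagger\_w$ and invokes strong duality for single-constraint quadratic programs (Appendix~B of Boyd--Vandenberghe) before minimizing over $\lambda\ge 0$. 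The two are equivalent here, and your Cauchy--Schwarz argument is arguably the more elementary of the two. You also make explicit why the relaxation enlarges the feasible set (nonnegativity of the off-diagonal of $\_\Sigma_{\sigma,\_r}$ combined with $\_h\ge 0$), which the paper's proof leaves implicit; and your remark that $r_i>0$ is tacitly needed for $\_w^\top(D(\_\Sigma_{\sigma,\_r}))^\dagger\_w=c(c-1)$ is a fair caveat that the paper does not spell out either.
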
 
\section{Algorithms: Mitigating Imbalances In \nesy}
\label{section:alg}

Section~\ref{section:theory} sends a clear message: \nesy~is prone to learning imbalances that may be exacerbated due to $\sigma$. 
The results of our theoretical analysis motivate us to develop a portfolio of techniques to address learning imbalances. 
Our first contribution, see Section~\ref{section:gold-marginal}, is a statistically consistent technique for estimating $\_r$, assuming access to weak labels only. 
We then proceed with training and testing time mitigation. 
Our mitigation algorithms enforce the class priors to a classifier’s predictions, a common idea in long-tailed learning. The intuition is that the classifier will tend to predict the labels that appear more often in the training data. Hence, enforcing the priors gives more importance to the minority classes at training time and encourages the model to predict minority classes at testing time.
Our marginal estimation algorithm requires the assumption that the $\+X$ instances in $\+D_{\:P}$ are i.i.d.; the other algorithms work even when this assumption fails. 
Table~\ref{table:notation-algorithms} summarizes the notation in Section~\ref{section:alg}.

\subsection{Estimating The Marginal Of The Hidden Labels}
\label{section:gold-marginal}

\mychange{
We begin with our technique for estimating $\_r$ using only \nesy~data $\+T_\:P$.
We denote the probability of occurrence (or ratio) of the $j$-th weak label $a_j \in \+S$ by
${p_j := \mathbb{P}(S = a_j)}$ and set $\_p = (p_1,\dots,p_{c_S})$. 
To estimate $\_r$, we rely on the observation that in \nesy, $p_j$ equals the probability of the label vectors in $\sigma^{-1}(a_j)$,
namely {${p_j = \sum_{(y_1, \dots, y_M) \in \sigma^{-1}(a_j)} \prod_{i=1}^M r_{y_i}}$}, 
which is a polynomial of $\_r$. 
}

\begin{example}\label{example:polynomials}
    Consider \textsc{Case (2)} from Example~\ref{example:motivating-cont}. 
    Assume that the marginals of the weak labels are uniform. Then, we can obtain $\_r$ by solving the following \emph{system of polynomial equations}: 
    ${[r_0^2,r_1^2 + 2r_0r_1,\dots,r_9^2 + 2\sum_{i=0}^8 r_ir_9]^\top
        = [1/10,1/10,\dots,1/10]^\top}$.
    The first equation denotes the probability a weak label is zero, which is $1/10$ (uniformity). 
    Due to $\sigma$, this can happen only when $y_1=y_2=0$. 
    Under the independence assumption, the above implies that ${r_0^2 = 1/10}$. 
    Analogously, the second and the last polynomials denote the probability a weak label is one and nine.
\end{example}

Let $P_{\sigma}$ be the system of polynomials ${[p_j]_{j \in [c_S]}^{\top} = [\sum_{(y_1, \dots, y_M) \in \sigma^{-1}(a_j)}]_{j \in [c_S]}^{\top}}$.
\mychange{Let $\Psi_\sigma$ be the function mapping each $r_j \in \+Y$ to its solution in $P_{\sigma}$, assuming $\_p$ is known.}
In practice, $\_p$ is unknown, but can be estimated from a \nesy~dataset $\+T_\:P$, namely ${\bar{p}_j := \sum_{k=1}^{|\+T_\:P|}\mathbbm{1}\{s_k = a_j\} / |\+T_\:P|}$.
As the $\bar{p}_j$'s can be noisy, the system of polynomials could become inconsistent. 
Therefore, instead of solving the polynomial equation as in Example \ref{example:polynomials}, we find an estimate $\hat{\_r}$, so that its induced prediction for the weak label ratio $\hat{\_p} := \Psi_\sigma(\hat{\_r})$ best fits to the empirical probabilities $\bar{p}_j$'s by means of cross-entropy.
Since this requires optimizing over the probability simplex $\Delta_c$, we reparametrize the estimated ratios $\hat{\_r}$ by $ \operatorname{softmax}(\_u)$, leading to the Algorithm~\ref{alg:solver}.
We prove its consistency in Appendix~\ref{appendix:solver}.

\begin{figure}[!t]
\begin{minipage}[t]{0.45\textwidth}
    \begin{algorithm}[H]
        \caption{\ALGA}
        \label{alg:solver}
        \begin{algorithmic}
            \State {\bfseries Input:} weak labels $\{s_k\}_{k=1}^{m_\:P}$, function $\sigma$, step size $t$, iterations $N_\text{iter}$
            \State {\bfseries Initialize:} logit ${\_u} \leftarrow  \_1_c$; ${\bar p_j}$, for $j \in [c_S]$
            \For{$N = 1, \dots, N_\text{iter}$}
                \State ${\_{\hat{r}} \gets \mathrm{softmax}(\_u)}$
                \For{\textbf{each} ${j \in [c_S]}$}
                    \State ${\hat{p}_j \gets \sum \limits_{(y_1, \dots, y_M) \in \sigma^{-1}(a_j)} \prod_{i=1}^M \hat r_{y_i}}$
                \EndFor
                \State $\ell \gets \sum_{j=1}^{c_S} \bar p_j{\log \hat{p}_j}$
                \State Backpropagate $\ell$ to update ${\_u}$
            \EndFor
            \State {\bfseries return} $\mathrm{softmax}(\_u)$
        \end{algorithmic}
    \end{algorithm}
\end{minipage}
\begin{minipage}[t]{0.5\textwidth}
    \begin{algorithm}[H]
        \caption{\CAROT}
        \label{alg:carot}
        \begin{algorithmic}
            \State {\bfseries Input:} model's raw scores $\_P \in \mathbbm{R}^{c\times n}$, ratio estimates $\hat{\_r} \in \mathbbm{R}^c$, entropic reg. parameter $\eta > 0$, margin reg. parameter $\tau > 0$, iterations $N_\text{iter}$
            \State {\bfseries Initialize:} $\_u \leftarrow \_0_n$;\; $\_v \leftarrow \_0_c$
            \For{$N = 1, \dots, N_\text{iter}$}
            \State $\_a \gets B(\_u, \_v) \_1_c$; \quad $\_b \gets B(\_u, \_v)^\top \_1_n$
                \If{$k$ is even}
                    \State \textbf{update} $\_v$ //see Section~\ref{section:carot}
                \Else
                     \State \textbf{update} $\_u$ //see Section~\ref{section:carot}
                \EndIf
            \EndFor
            \State {\bfseries return $B(\_u, \_v)$}
        \end{algorithmic}
    \end{algorithm}
\end{minipage}
\end{figure}

\subsection{Training Time Imbalance Mitigation Via Linear Programming} \label{section:ilp}

We now turn to training time mitigation.
We aim to find pseudolabels $\_Q$ that are close to the classifier’s scores and adhere to $\hat{\_r}$ and use $\_Q$ to train the classifier using the cross-entropy loss. 
There are two design choices: 
(i) whether to find pseudolabels at the individual instance level or at the batch level; (ii) whether to be strict in enforcing the marginal ${\hat{\_r}}$. In addition, we face two challenges: 
(iii) we are provided 
$M$-ary tuples of instances of the form ${(x_1,\dots,x_M)}$; (iv) 
$\_Q$ must additionally abide by the constraints coming from $\sigma$ and the weak labels, e.g., when $s=1$ in Example~\ref{example:motivating}, then the only valid label assignments for ${(x_1,x_2)}$ are (1,1), (0,1) and (1,0). 
Regarding (i), finding pseudolabels at the individual instance level
does not guarantee that the modified scores match $\hat{\_r}$ \citep{ot-pll}. 
Regarding (ii), strictly enforcing ${\hat{\_r}}$ could be problematic as $\hat{\_r}$ can be noisy.

To accommodate the above requirements while avoiding the crux of solving nonlinear programs, 
we rely on a novel \emph{linear programming} (LP) formulation of \nesy~that finds pseudolabels for a batch of $n$ scores. 
We use ${(x_{\ell,1},\dots,x_{\ell,M}, s_\ell)}$ to denote the $\ell$-th \nesy~training sample in a batch of size $n$. 
We also use ${\_P_i \in [0,1]^{n \times c}}$ and ${\_Q_i \in [0,1]^{n \times c}}$, for ${i \in [M]}$, to denote the classifier's scores and the pseudolabels assigned to the $i$-th input instances of the batch. In particular, ${P_i[\ell,j] = f^j(x_{\ell,i})}$, while ${Q_i[\ell,j]}$ is the corresponding pseudolabel. 
\mychange{Before continuing, it is crucial to explain how to associate each training sample $s_{\ell}$ with a Boolean formula in \emph{disjunctive normal form} (DNF). Associating weak labels with DNF formulas is standard in the neurosymbolic literature \citep{pmlr-v80-xu18h,neurolog,scallop,NeurIPS2023}. For $\ell \in [n]$,  $i \in [M]$, and $j \in [c]$, let $q_{\ell,i,j}$ be a Boolean variable that is true if $x_{\ell,i}$ is assigned label $j \in \+Y$ and false otherwise. 
Let $R_{\ell}$ be the size of $\sigma^{-1}(s_\ell)$. 
Based on the above, we can associate each label vector ${\_y}$ in $\sigma^{-1}(s_\ell)$ with a conjunction $\phi_{\ell,t}$ of Boolean variables from ${\{q_{\ell,i,j}\}_{i \in [M],j \in [c]}}$, such that $q_{\ell,i,j}$ occurs in $\phi_{\ell,t}$ only if the $i$-th label in ${\_y}$ is $j \in \+Y$.
We assume a canonical ordering over the variables occurring in each $\varphi_{\ell,t}$, for $t \in [R_\ell]$, and use $\varphi_{\ell,t,k}$ to refer to the $k$-th variable. We use ${|\varphi_{\ell,t}|}$ to denote the number of variables in $\varphi_{\ell,t}$.}   

Based on the above, finding a pseudolabel assignment for ${(x_{\ell,1},\dots,x_{\ell,M})}$ that adheres to $\sigma$ and $s_\ell$ reduces to finding an assignment to the variables in ${\{q_{\ell,i,j}\}_{i \in [M],j \in [c]}}$ that makes $\Phi_\ell$ hold. Previous work \citep{roth2007global,ILP-cookbook} has shown that we can cast satisfiability problems to linear programming problems. Therefore, instead of finding a Boolean assignment to each $q_{\ell,i,j}$, we can find an assignment in ${[0,1]}$ for the real counterpart of $q_{\ell,i,j}$ denoted by ${[q_{\ell,i,j}]}$.  
Via associating the ${[q_{\ell,i,j}]}$'s to the entries in the $\_Q_i$'s, i.e., 
${Q_i[\ell,j] = [q_{\ell,i,j}]}$, 
we can solve the following linear program to perform pseudolabeling:   
\begin{equation}
    \begin{aligned}
    \textbf{objective } \quad & \min_{(\_{Q}_1,\dots, \_{Q}_M)} \sum_{i=1}^M \langle -\log(\_{P}_i), \_{Q}_i \rangle, 
    \label{eq:multi-instance-pll-lp} \\
    \textbf{s.t. }\quad &
    \begin{array}{rll}
    \sum_{t=1}^{R_\ell} [\alpha_{\ell,t}] & \geq 1, & \ell \in [n] \\
    -|\varphi_{\ell,t}| [\alpha_{\ell,t}] + \sum_{k=1}^{|\varphi_{\ell,t}|} [\varphi_{\ell,t,k}] &\geq 0, & \ell \in [n], t \in [R_\ell] \\
    - \sum_{k=1}^{|\varphi_{\ell,t}|} [\varphi_{\ell,t,k}] + [\alpha_{\ell,t}] & \geq (1-|\varphi_{\ell,t}|), & \ell \in [n], t \in [R_\ell] \\
    \sum_{j=1}^{c} [q_{\ell,i,j}] & = 1, & \ell \in [n], i \in [M] \\
    \left[q_{\ell,i,j}\right] & \in [0,1], & \ell \in [n],  i \in [M], j \in [c] \\
    |\_{Q}_i \cdot \_{1}_{n} - n\hat{\_r}| & \leq \epsilon, & i \in [M]\\
    \end{array}
    \end{aligned}  
\end{equation}
The objective in \eqref{eq:multi-instance-pll-lp} aligns with our aim to find pseudolabels close to the classifier’s scores. The independence among the classifier's scores for different $\_x_{\ell,i}$'s
justifies the sum over different $i$'s in the minimization objective. 
The first three constraints force the pseudolabels for the $\ell$-th training sample to adhere to $\sigma$ and $s_\ell$, where the $\alpha_{\ell,t}$'s are Boolean variables introduced due to converting the $\Phi_\ell$'s into \emph{conjunctive normal form} using the Tseytin transformation \citep{tseitin}.
The fourth and the fifth constraint wants the pseudolabels for each instance $x_{\ell,i}$ to sum up to one and lie in ${[0,1]}$. 
Finally, the last constraint wants for each $i \in [M]$, the probability of predicting the $j$-th pseudolabel for an element in ${\{x_{\ell,i}\}_{\ell \in [n]}}$ to match the ratio estimates at hand $\hat{r}_j$ up to some $\epsilon \geq 0$: the smaller $\epsilon$ gets, the stricter the adherence to $\hat{\_r}$ becomes.
\mychange{The detailed derivation of \eqref{eq:multi-instance-pll-lp} and an example are in Appendix~\ref{appendix:ilp}.} 

In summary, in training time mitigation, for each epoch, we split the training samples into batches. Then, for each batch ${\{(x_{\ell,1},\dots,x_{\ell,M}, s_\ell)\}_{\ell \in [n]}}$, we form ${\_P_1, \dots,\_P_M}$ by applying $f$ to the $x_{\ell,i}$'s and solve \eqref{eq:multi-instance-pll-lp} to get the pseudolabels ${\_Q_1, \dots,\_Q_M}$. Finally, we minimize the cross-entropy loss between ${\_Q_1, \dots,\_Q_M}$ and ${\_P_1, \dots,\_P_M}$.
We name this training technique \ILP.
Our formulation in \eqref{eq:multi-instance-pll-lp} is oblivious to $\hat{\_r}$, which can be estimated using Algorithm~\ref{alg:solver} or any other technique, e.g., \citep{solar}.

\subsection{\CAROT: Testing Time Imbalance Mitigation} \label{section:carot}

We conclude with \CAROT, our algorithm to mitigate learning imbalances at testing time by modifying the model’s scores to adhere to the estimated ratios $\hat{\_r}$.
Incorporating $\hat{\_r}$ into the model's scores involves the design choices (i) and (ii) presented at the beginning of Section~\ref{section:ilp}-- challenges (iii) and (iv) are specific to training.
Regarding (i), most existing testing time mitigation algorithms (e.g., \citep{LA}) modify a model's scores at the level of individual instances. 
Regarding (ii), as explained in Section~\ref{section:ilp}, strictly enforcing ${\hat{\_r}}$ may also be problematic, since ${\hat{\_r}}$ may be different from the label marginals underlying the test data.
Similarly to Section~\ref{section:ilp}, we propose to adjust the model's scores for a whole batch of ${n > 1}$ test samples (represented by a matrix $\_P \in \-R^{n \times c}$) so that the adjusted scores $\_P'$ roughly adhere to ${\hat{\_r}}$. 
Precisely, we propose to find the $\_P'$ that optimizes the following objective:
\begin{equation}
\label{eqn:ot-objective}
    \textstyle \min_{\_P' \in \-R^{n \times c}_+, \_P'\_1_c = \_1_n} 
    \langle -\log(\_P), \_P' \rangle
    + {\tau \KL{\_P'^\top \_1_n}{n\hat{\_r}\,}}
    - \eta H(\_P')
\end{equation}
The first term in \eqref{eqn:ot-objective} encourages $\_P'$ to be close to the original scores.
The second term encourages the column sums of $\_P'$ to match $\hat{\_r}$, with $\tau > 0$ controlling adherence, where KL is the Kullback-Leibler divergence.
This formulation leads to a \textit{robust semi-constrained optimal transport} (RSOT) problem \citep{robust-ot}. 
The regularizer $\eta H(\_P')$, where $H$ denotes entropy, allows to approximate the optimal solution using the robust semi-Sinkhorn algorithm \citep{robust-ot}, leading to \CAROT ~(\textit{Confidence-Adjustment via Robust semi-constrained Optimal Transport}), see Algorithm \ref{alg:carot}. 

In Algorithm~\ref{alg:carot}, ${B(\_u, \_v)}$ denotes an ${n\times c}$ matrix whose $(i,j)$ cell is computed as a function of $\_u$ and $\_v$ by  
${\exp(u_i+v_j + \log(P_{ij}) / \eta)}$.
In each iteration, the algorithm alternates between 
updating the $c$-dimensional vector $\_v$ 
and the $n$-dimensional vector $\_u$. 
The former update, which is computed as ${\_v \gets \frac{\eta \tau}{\eta + \tau} \left(\frac{\_v}{\eta} + \log(n\hat{\_r}) - \log(\_b) \right)}$, forces ${B(\_u, \_v)}$ to adhere to $\hat{\_r}$; the latter, which is computed as ${\_u \gets {\eta} \left(\frac{\_u}{\eta} + \log(\_1_n) - \log(\_a) \right)}$, forces
the elements in each row of ${B(\_u, \_v)}$ to add up to one.  

\textbf{Choice of $\eta$ and $\tau$.} 
In practice, we use a small \emph{\nesy} validation set to choose $\eta$ and $\tau$. By doing so, the validation set can be obtained by splitting the training set of \nesy~data $\+T_\:P$. 

\textbf{Guarantees.} 
The matrix ${B(\_u, \_v)}$ converges to the optimal solution to \eqref{eqn:ot-objective} as $N_{\text{iter}} \rightarrow \infty$, see \citep{robust-ot}.

\section{Experiments}
\label{section:experiment}
\vspace{-0.3cm}

We consider the state-of-the-art loss \textit{semantic loss} (SL) \citep{pmlr-v80-xu18h,NeurIPS2023,scallop} and use the engine Scallop \citep{scallop} that performs \nesy~training using that loss. 
We do not consider \cite{ABL,iclr2023,neurolog,deepproblog-journal,deepproblog-approx,neurasp} for reasons related to scalability (see \cite{NeurIPS2023,scallop}), while the work in \cite{reasoning-shortcuts,bears} is orthogonal to ours. 
Since there are no prior \nesy~techniques for mitigating imbalances at testing time, we consider Logit Adjustment (\LA) \citep{LA} as a competitor to \CAROT. 
The notation +\textsc{A}, for 
${\textsc{A} \in \{\LA ,\CAROT\}}$, means that the scores of a baseline model are modified at testing time via $\textsc{A}$.
We do not assume access to a validation set of gold labelled data, applying \LA~and \CAROT~using the estimate $\hat{\_r}$ obtained via Algorithm~\ref{alg:solver}. 
We also run experiments with \RECORDS~\citep{long-tailed-pll-dynamic-rebalancing}, a technique that mitigates imbalances at training time for PLL \cite{cour2011} (no previous \nesy~training time baseline exists). We use SL+\RECORDS~when a classifier has been trained using  
\RECORDS~in conjunction with SL. 
\RECORDS~acts as a competitor to \ILP. 
Finally, we run experiments using \ILP, see Section~\ref{section:ilp}. We use \ILP(\GOLD) and \ILP(\EMP), when \ILP~is applied using the ratios obtained using Algorithm~\ref{alg:solver} and the approximation from \citep{solar}. 

\textbf{Benchmarks.}
We carry experiments using \nesy~benchmarks previously used in the NSL literature \citep{DBLP:journals/corr/abs-1907-08194,deepproblog-approx,scallop,iclr2023}, namely \MAX-$M$, SUM-$M$ \citep{DBLP:journals/corr/abs-1907-08194,scallop} and HWF-$M$ \citep{iclr2023,scallop-journal}, 
as well as a newly introduced, called \SMALLESTPARENT. Training samples in \MAX-$M$ are as described in Example~\ref{example:motivating}. We vary $M$ to ${\{3,4,5\}}$ and use the MNIST benchmark to obtain training and testing instances.
In \SMALLESTPARENT, training samples are of the form ${(x_1,x_2,p)}$, where
$x_1$ and $x_2$ are CIFAR-10 images and $p$ is the most immediate common ancestor of $y_1$ and $y_2$, assuming the classes form a hierarchy.
To simulate long-tail phenomena (denoted as \textbf{LT}), we vary the imbalance ratio $\rho$ of the distributions of the input instances as in \citep{Label-Distribution-Aware-Margin-Loss,solar}: $\rho=0$ means that the hidden label distribution is unmodified and balanced.  
Our scenarios are quite challenging. First, the pre-image of $\sigma$ may be particularly large, making the supervision rather weak, e.g., in the MAX-5 scenario, there are $5 \times 9^4$ candidate label vectors when the weak label is $9$. Second, the functions may exacerbate the imbalances in the hidden labels, with the probability of certain weak labels getting very close to zero. For example, in MAX-5, the probability of $s=0$ is $10^{-5}$ when $\rho = 0$. This probability becomes even smaller when $\rho=50$. 
The results of our analysis are summarized in Table~\ref{table:mmax} and Figure~\ref{fig:ratios}.
The accuracies in all the tables (obtained over three different for low-variance scenarios and ten runs over high-variance scenarios) are balanced, i.e., 
they are the weighted sums of the class-specific accuracies, where each weight is the ratio of the corresponding class in the test data. Due to lack of space, we discuss the results on SUM-$M$, HWF-$M$, and further details in Appendix~\ref{appendix:exp-details}.  

\begin{table}[h]
    \begin{center}
    \caption{(Top) Results for MAX-$M$ \& $m_\:P = 3$K. (Bottom) Results for \SMALLESTPARENT~\& $m_\:P = 10$K.}
    \vspace{-0.3cm}
    \label{table:mmax}
    \resizebox{\textwidth}{!}{
    \begin{threeparttable}
    \begin{tabular}{l | c | c | c | c | c | c | c | c | c}
    \toprule

    \multirow{2}{*}{\textbf{Algorithms}}                    &
    \multicolumn{3}{c|}{\textbf{Original} $\rho = 0$}       &
    \multicolumn{3}{c|}{\textbf{LT} $\rho = 5$}             & 
    \multicolumn{3}{c}{\textbf{LT} $\rho = 50$}            \\
    & 
    $M=3$ & $M=4$ & $M=5$ & 
    $M=3$ & $M=4$ & $M=5$ & 
    $M=3$ & $M=4$ & $M=5$ \\ 
    \midrule
    SL                & 
    84.15 $\pm$ 11.92  & 73.82 $\pm$ 2.36  & 59.88 $\pm$ 5.58  & 
    55.48 $\pm$ 23.23  & 66.24 $\pm$ 1.22  & 55.13 $\pm$ 4.20  &
    66.74 $\pm$ 5.42   & 70.33 $\pm$ 6.58 & 55.74 $\pm$ 2.58  \\
    $~~~$ + LA        & 
    84.17 $\pm$ 11.95  & 73.82 $\pm$ 2.36  & 59.88 $\pm$ 5.58  & 
    55.48 $\pm$ 23.23  & 65.63 $\pm$ 1.75  & 55.13 $\pm$ 4.20  &
    66.57 $\pm$ 5.09   & 61.10 $\pm$ 3.95  & 52.47 $\pm$ 8.06  \\
    $~~~$ + \CAROT       & 
    84.57 $\pm$ 11.50  & 73.08 $\pm$ 3.10  & 60.26 $\pm$ 5.20  &
    56.52 $\pm$ 21.70  & 66.70 $\pm$ 0.76  & 55.91 $\pm$ 3.42  &
    68.16 $\pm$ 4.00   & 68.25 $\pm$ 6.14  & 57.29 $\pm$ 14.17 \\
    \midrule
    RECORDS             &  
    85.56 $\pm$ 7.25   & 75.11 $\pm$ 0.77   & 59.43 $\pm$ 6.61  & 
    77.98 $\pm$ 3.13   & 65.85 $\pm$ 0.62   & 55.07 $\pm$ 4.24  &
    70.20 $\pm$ 7.65   & 72.05 $\pm$ 8.34  & 59.93 $\pm$ 4.86  \\
    $~~~$ + LA        &
    87.63 $\pm$ 5.11  & 75.11 $\pm$ 0.77  & 59.28 $\pm$ 6.76  &
    77.98 $\pm$ 3.13  & 65.43 $\pm$ 0.87  & 54.40 $\pm$ 4.44  &
    70.09 $\pm$ 7.26  & 69.78 $\pm$ 11.01 & 59.93 $\pm$ 4.86  \\
    $~~~$ + \CAROT    & 
    90.97 $\pm$ 2.03  & 75.94 $\pm$ 0.91  & 60.45 $\pm$ 7.78  &
    78.31 $\pm$ 4.00  & 67.57 $\pm$ 1.74  & 55.46 $\pm$ 3.94  &
    71.46 $\pm$ 6.4   & 71.25 $\pm$ 8.70  & 63.64 $\pm$ 5.92  \\
    \midrule
    \ILP (\EMP)             &
    94.97 $\pm$ 1.32  & 77.86 $\pm$ 4.22  & 55.27 $\pm$ 11.27  &
    80.15 $\pm$ 1.69  & 70.73 $\pm$ 1.85  & 56.28 $\pm$ 2.03  &
    77.16 $\pm$ 3.46  & 72.08 $\pm$ 8.34 & 56.79 $\pm$ 1.58   \\
    $~~~$ + LA        & 
    94.69 $\pm$ 1.60  & 77.91 $\pm$ 4.16  & 55.34 $\pm$ 11.19  &
    80.08 $\pm$ 1.55  & 70.54 $\pm$ 1.82  & 55.31 $\pm$ 3.27  &
    77.1 $\pm$ 3.52   & 70.33 $\pm$ 8.01 & 56.81 $\pm$ 1.56   \\
    $~~~$ + \CAROT    & 
    95.07 $\pm$ 1.20  & 75.53 $\pm$ 7.42  & 53.07 $\pm$ 12.99  &
    80.29 $\pm$ 2.33  & 70.88 $\pm$ 2.22  & 57.85 $\pm$ 4.05  & 
    77.58 $\pm$ 3.04  & 72.08 $\pm$ 8.34 & 57.09 $\pm$ 1.90   \\
    \midrule
    \ILP(\GOLD)            & 
    96.09 $\pm$ 0.41  & 78.34 $\pm$ 4.80  & 59.91 $\pm$ 6.63  &
    78.56 $\pm$ 1.52  & 69.71 $\pm$ 0.03  & 57.61 $\pm$ 3.09  & 
    73.39 $\pm$ 9.35 & 69.28 $\pm$ 11.78 & 63.67 $\pm$ 7.04  \\
    $~~~$ + LA       & 
    95.81 $\pm$ 0.74  & 78.97 $\pm$ 4.09  & 59.98 $\pm$ 6.56  &
    78.48 $\pm$ 1.53  & 69.71 $\pm$ 0.03  & 57.47 $\pm$ 3.09  & 
    73.39 $\pm$ 9.35 & 69.21 $\pm$ 11.86 & 63.67 $\pm$ 7.04  \\
    $~~~$ + \CAROT   & 
    96.13 $\pm$ 0.38  & 80.78 $\pm$ 2.36  & 59.71 $\pm$ 6.35  &
    78.93 $\pm$ 1.85  & 70.32 $\pm$ 0.86  & 57.62 $\pm$ 3.08  &
    73.39 $\pm$ 9.35 & 74.30 $\pm$ 7.54  & 64.39 $\pm$ 6.43  \\
    \bottomrule 
    \end{tabular}
    \end{threeparttable}
    }
    
    \begin{center}
    \begin{adjustbox}{max width=\textwidth}
    \begin{threeparttable}
    \begin{tabular}{l | c | c | c | c || l | c | c | c | c}
    \toprule

    {\textbf{Algorithms}}           &
    {\textbf{Original} $\rho = 0$}  &
    {\textbf{LT} $\rho = 5$}        & 
    {\textbf{LT} $\rho = 15$}       & 
    {\textbf{LT} $\rho = 50$}       &
    {\textbf{Algorithms}}           &
    {\textbf{Original} $\rho = 0$}  &
    {\textbf{LT} $\rho = 5$}        & 
    {\textbf{LT} $\rho = 15$}       & 
    {\textbf{LT} $\rho = 50$}       \\
    \midrule
    SL             & 69.82 $\pm$ 0.53  & 67.94 $\pm$ 0.40  & 69.04 $\pm$ 0.03  & 74.65 $\pm$ 0.44 &
    \ILP (\EMP)     & 79.41 $\pm$ 1.33  & 79.24 $\pm$ 1.03  & 68.40 $\pm$ 1.90  & 70.29 $\pm$ 1.62 \\
    ~~~ + LA        & 69.83 $\pm$ 0.53  & 67.93 $\pm$ 0.41  & 68.70 $\pm$ 0.30  & 74.62 $\pm$ 0.36 &
    ~~~ + LA        & 79.41 $\pm$ 1.33  & 79.24 $\pm$ 1.03  & 68.40 $\pm$ 1.90  & 70.29 $\pm$ 1.62 \\ 
    ~~~ + \CAROT    & 69.82 $\pm$ 0.53  & 67.93 $\pm$ 0.41  & 68.70 $\pm$ 0.41 & 74.15 $\pm$ 0.47 &
    ~~~ + \CAROT    & 79.41 $\pm$ 1.33  & 79.28 $\pm$ 0.91 & 77.10 $\pm$ 1.74  & 80.71 $\pm$ 1.50 \\ 
    \midrule
    RECORDS             & 48.71 $\pm$ 3.90  & 48.15 $\pm$ 4.56  & 50.14 $\pm$ 1.10  & 55.12 $\pm$ 1.40 &
    \ILP (\GOLD)            & 80.23 $\pm$ 0.70  & 81.27 $\pm$ 0.71  & 81.99 $\pm$ 0.51  & 83.44 $\pm$ 0.48 \\
    ~~~ + LA        & 54.12 $\pm$ 2.00  & 45.48 $\pm$ 2.31  & 56.83 $\pm$ 1.30  & 60.87 $\pm$ 1.20 &
    ~~~ + LA       & 80.20 $\pm$ 0.74  & 81.26 $\pm$ 0.72  & 81.99 $\pm$ 0.51  & 83.44 $\pm$ 0.48 \\
    ~~~ + \CAROT    & 68.16 $\pm$ 0.47  & 69.04 $\pm$ 0.74  & 71.70 $\pm$ 0.84  & 75.69 $\pm$ 0.90 &
    ~~~ + \CAROT   & 68.90 $\pm$ 11.09 & 76.38 $\pm$ 5.68  & 82.00 $\pm$ 0.51 & 83.44 $\pm$ 0.48 \\
    \bottomrule 
    \end{tabular}
    \end{threeparttable}
    \end{adjustbox}
    \end{center}
    \label{table:cifar}
    \end{center}
    \vspace{-0.45cm}
\end{table}

\begin{wrapfigure}{R}{0.38\textwidth}
\centering
\vspace{-1\intextsep}
\hspace*{-.85\columnsep}
\begin{minipage}{0.35\textwidth}
    \centering
    \includegraphics[width=\textwidth]{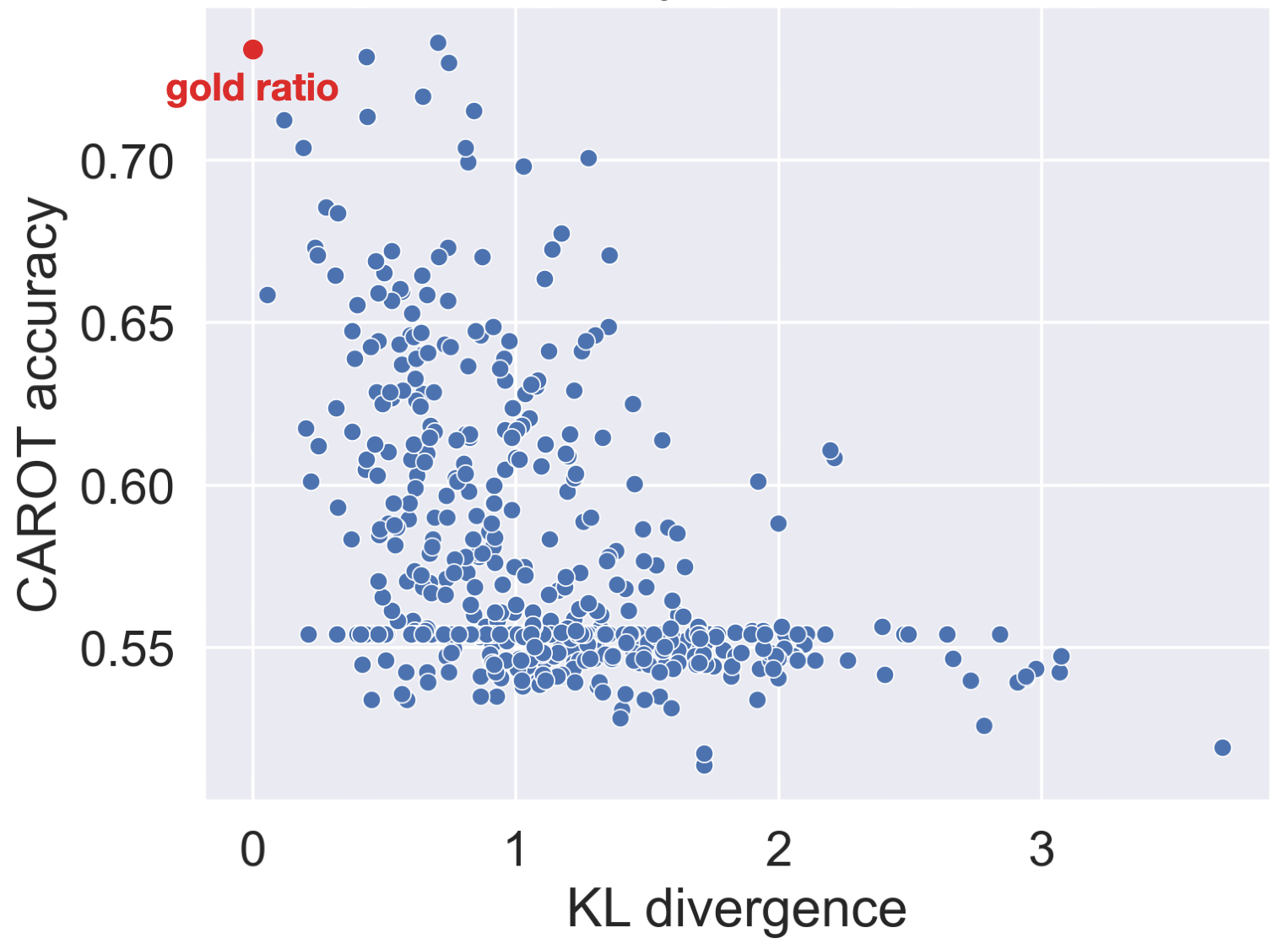}
    \caption{Impact of the label ratio quality on \CAROT's performance.}
    \label{fig:carot-sensitivity}
\vspace{-0.45cm}
\end{minipage}
\end{wrapfigure}

\textbf{Conclusions.} 
We observed many interesting phenomena: (i) training time mitigation can significantly improve the accuracy; (ii) state-of-the-art on training time mitigation might not be appropriate for \nesy; (iii) approximate techniques for estimating ${\mathbf{r}}$ can sometimes be more effective when used for training time mitigation -- however, it is robust to softmax reparametrization; 
(iv) testing time mitigation can substantially improve the accuracy of a classifier; however, it tends to be less effective than training time mitigation; (v) \CAROT~may be sensitive to the quality of estimated ratios $\hat{\_r}$; (vi) Algorithm~\ref{alg:solver} offers quite accurate marginal estimates. 

\begin{figure}[tb]
    \begin{adjustbox}{max width=\textwidth}
        \begin{tabular}{ccc}
            {\includegraphics[width = 0.45\textwidth]{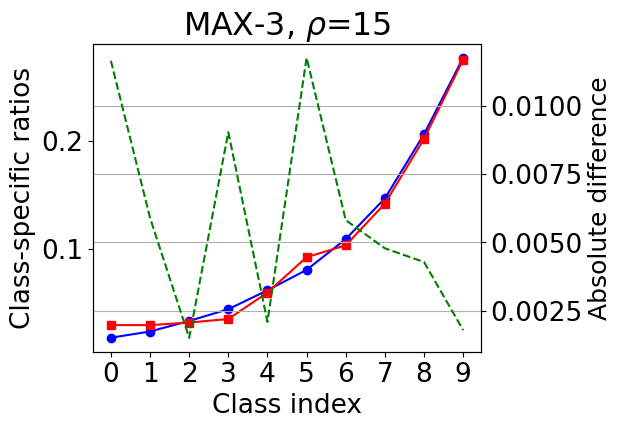}} &
            {\includegraphics[width = 0.45\textwidth]{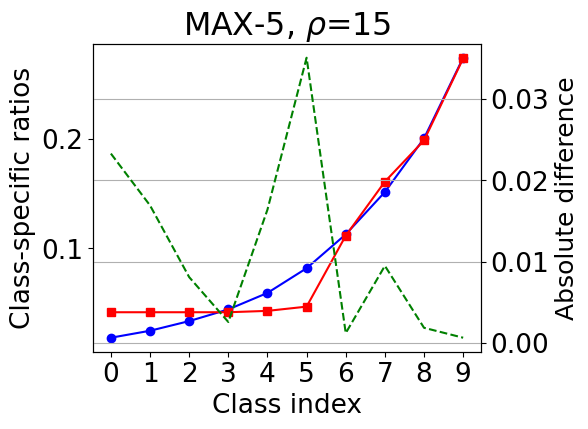}} &
            {\includegraphics[width = 0.45\textwidth]{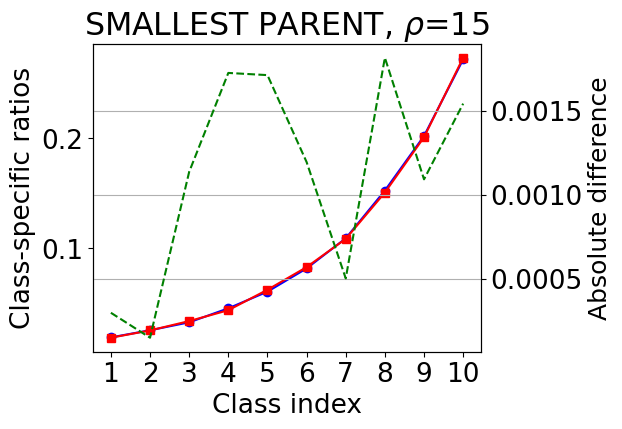}} \\
        \end{tabular}
    \end{adjustbox}
    \caption{Accuracy of the marginal estimates computed by Algorithm~\ref{alg:solver}. \textcolor{blue}{Blue} denotes the gold ratios, \textcolor{red}{red} the estimated ones, and \textcolor{green}{green} the absolute difference between the gold and estimated ratios.}
    \vspace{-0.45cm}
    \label{fig:ratios}
\end{figure}

Starting from the last conclusion, Figure~\ref{fig:ratios} shows that 
Algorithm~\ref{alg:solver} offers quite accurate estimates even in challenging scenarios with high imbalance ratios.  
Regarding (i), let us focus on Table~\ref{table:cifar}. We can see that both \ILP(\EMP) and \ILP(\GOLD) lead to higher accuracy than models trained exclusively via SL.   
For example, when $\rho=5$ in \SMALLESTPARENT, the mean accuracy obtained via training under SL is 67.94\%; the mean accuracy increases to 
79.24\% under \ILP(\EMP) and to 81.27\% under \ILP(\GOLD).   
In MAX-4, the mean accuracy under SL is 55.48\%, increasing to 78.56\% under \ILP(\GOLD).
Regarding (ii), consider again Table~\ref{table:cifar}: when \RECORDS~is applied jointly with SL, the accuracy of the model can drop substantially, e.g., when $\rho=5$ in Table~\ref{table:cifar}, the mean accuracy drops from 67.94\% to 48.15\%. 
The above stresses the importance of \ILP~ (Section~\ref{section:ilp}).   

Let us move to (iii). In most of the cases, \ILP(\GOLD) leads to higher accuracy than \ILP(\EMP). However, the opposite may also hold in some cases.  
One such example is MAX-$3$ for $\rho=50$: the mean accuracy for the baseline model is 66.74\%, increasing to 72.23\% under \ILP(\GOLD) and to 77.16\% under \ILP(\EMP). 
The above suggests that there can be cases where employing the gold ratios is not the best solution. A similar observation is made in \citep{long-tailed-pll-dynamic-rebalancing}. One cause of this phenomenon is the high number of classification errors during the initial stages of learning. Those classification errors can become higher in our experimental setting, as in MAX-$M$, we only consider a subset of the pre-images of each weak label to compute SL and \eqref{eq:multi-instance-pll-lp}, to reduce the computational overhead of computing all pre-images. 
We conclude with \CAROT.  
Table~\ref{table:mmax} shows that \CAROT~can be more effective than \LA. 
For example, in \SMALLESTPARENT~and $\rho=50$, the mean accuracy of \ILP(\EMP) increases from 70.29\% to 80.71\% under \CAROT; \LA~has no impact. \CAROT~may also improve the accuracy of \RECORDS~models, often, by a large margin. 
For example, for \SMALLESTPARENT~and $\rho=15$, the mean accuracy of a \RECORDS-trained model increases from 
50.14\% to 71.70\% when \CAROT~is applied.  
\CAROT~is also consistently better than \LA~when applied on top of \RECORDS.
However, there may be cases where \LA~and \CAROT~drop the accuracy of the baseline model. One such example is met in \SMALLESTPARENT~and $\rho=5$.

\begin{wrapfigure}{R}{0.7\textwidth}
\centering
\vspace{-\intextsep}
\hspace*{-.85\columnsep}
    \begin{minipage}{0.35\textwidth}
        \centering
        \includegraphics[width=\linewidth]{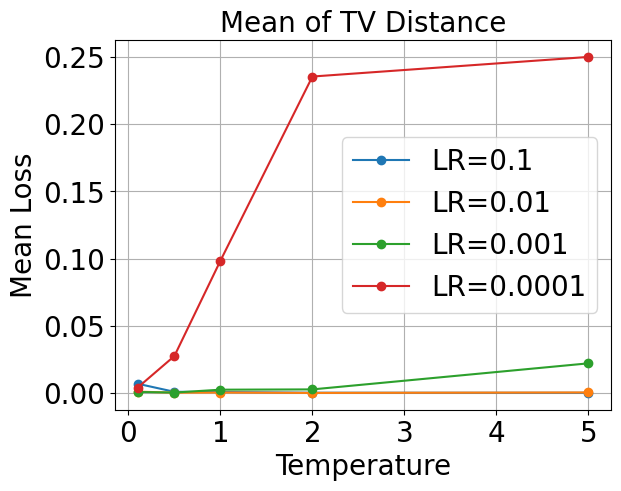}
    \end{minipage}\hfill
    \begin{minipage}{0.35\textwidth}
        \centering
        \includegraphics[width=\linewidth]{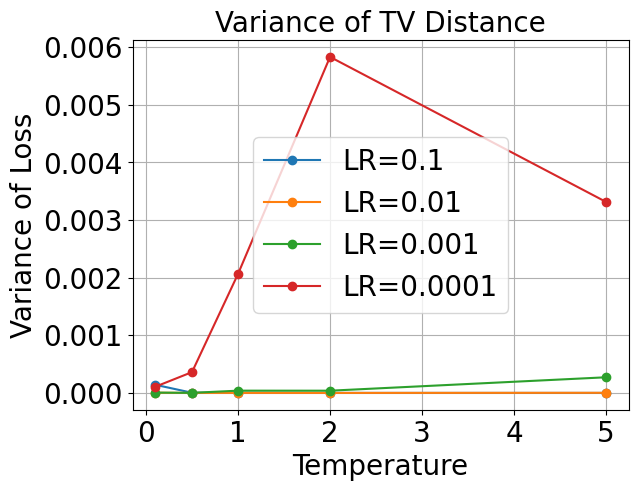}
    \end{minipage}
    \vspace{-0.2cm}
    \caption{Sensitivity of Algorithm~\ref{alg:solver} to softmax reparameterization.}
    \label{fig:softmax-reparam} 
    \vspace{-0.3cm}
\end{wrapfigure}

We analyze the sensitivity of \CAROT~to the quality of the input $\hat{\mathbf{r}}$. Quality is measured by means of the KL divergence to $\_r$. 
Figure~\ref{fig:carot-sensitivity} shows the accuracy of an MNIST model (trained with the MAX-3 dataset), when \CAROT~is applied at testing time using 500 randomly generated ratios $\hat{\mathbf{r}}$ of varying quality. 
We observe that \CAROT's effectiveness drops as the estimated
marginal diverges more from $\mathbf{r}$.
Its performance may also decrease by more than 10\% with only a small perturbation in the KL divergence. This instability may be the reason \CAROT~fails to improve a base model.

To test the sensitivity of \CAROT~to softmax reparameterization, we performed an additional empirical analysis for \MAX-3. We consider a range of different learning rates (LR) ($\{0.1, 0.01, 0.001, 0.0001\}$) and temperatures ($\{0.1, 0.5, 1, 2, 5\}$) when running Algorithm~\ref{alg:carot}. In each run, we randomly generate (1) a true label ratio and (2) 20 initialization points for Algorithm~\ref{alg:carot}. We run the Adam optimizer for 10,000 iterations and compute the total variation (TV) distance between the estimated label ratio and the gold label ratio. Then, we compute the mean and variance of the TV for each experiment. The results are shown in Figure~\ref{fig:softmax-reparam}. We see that when the temperature is $\le 2$ and the learning rate $\in \{0.1, 0.01, 0.001\}$ (which are typical choices in machine learning experiments), \CAROT~consistently achieves $< 0.01$ TV distance, suggesting its robustness.

\vspace{-0.3cm}
\section{Related work} \label{section:related}
\vspace{-0.3cm}
A more detailed comparison against the related work is in Appendix~\ref{appendix:RW}.

\textbf{\nesy}.\;
We start with some recent theoretical results and training techniques for \nesy. 
The authors at \cite{NeurIPS2023} show PAC learnability for \nesy, the authors at \cite{reasoning-shortcuts} characterize the number of deterministic optimal neural classifiers as a function of $\sigma$ and propose techniques to improve learning accuracy. However, they make additional assumptions about the training data or the classifiers. In contrast, we propose imbalance mitigation techniques without making additional assumptions.   
The authors in \cite{iclr2023} and \cite{bears} propose learning techniques based on unified expectation maximization \cite{UEM} and entropy regularization, respectively.  
Unlike our work, none of the above studies empirically or theoretically learning imbalances in \nesy.
An interesting direction is to combine the active learning strategy in \citep{bears} with our training time mitigation technique. In particular, we could encourage the acquisition of labels for classes that maximize the entropy and, at the same time, appear with smaller ratios. The latter can be achieved, for example, by assigning a higher weight to classes with smaller (estimated) ratios in the entropy computation. 

\textbf{Long-tailed supervised learning}. Two supervised learning techniques related to our work are \LA~\citep{LA} and OTLM \citep{ot-pll}. Both aim at testing time mitigation. \LA~modifies the classifier's scores by subtracting the gold ratios.
\CAROT~can be substantially more effective than \LA, see 
Section~\ref{section:experiment}. OTLM assumes that the marginal $\_r$ is known, resorting to an OT formulation to adjust the classifier's scores. In contrast, we propose a statistically consistent technique to estimate $\_r$, see Section~\ref{section:gold-marginal}, and resort to RSOT to accommodate noisy $\hat{\_r}$'s.
Finally, well-known re-weighting schemes \citep{arjovsky2020invariantriskminimization,overparameterization-exacerbates} are not applicable to our setting: they require access to the gold labels; we assume the gold labels are hidden. 

\textbf{Long-tailed PLL}. 
There is no previous work on long-tailed MI-PLL. Hence, we focus on standard PLL. The authors in \citep{cour2011}
showed that certain classes are harder to learn than others in PLL. We are the first to extend these results to \nesy. 
The only two works at the intersection of long-tailed learning and PLL are
\RECORDS~\citep{long-tailed-pll-dynamic-rebalancing} and \solar~\citep{solar}. \RECORDS~modifies the classifier's scores using the same idea as \LA~and employs a momentum-updated prototype feature to estimate $\hat{\_r}$.
Section~\ref{section:experiment} shows that \RECORDS~is less effective than our proposals, degrading the baseline accuracy on multiple occasions. 
\solar~cannot act as a competitor to our technique, since it cannot be straightforwardly extended to handle training samples with multiple instances, see Appendix~\ref{appendix:RW}. 

\section{Conclusions and Future Work}
\vspace{-0.2cm}
\textbf{Comments on the theory.} 
In Section~\ref{section:theory}, the probability of misclassifying an instance $x$ depends only on its class. 
This assumption is also adopted in other settings, such as \emph{noisy label learning} \citep{noisy-multiclass-learning,noisy-deep-learning}. 
Although there are scenarios where this assumption does not hold, our theory is an over-approximation to those scenarios similarly to the connection between class- and instance-dependent noisy label learning. 
Our analysis in Section~\ref{section:theory} assumes that the weak label $s$ depends on the instance $x$ only via its class.
Nevertheless, it is straightforward to extend our theory to instance-dependent symbolic components $\sigma$. 
This can be done by partitioning the input space into sub-regions, where in each region, the symbolic component is a fixed function.
Generalization bounds can then be derived per region using our methods and averaged to obtain an overall bound.
Furthermore, our formulation in \eqref{eqn:optim} can be extended when the instances ${(x_1,\dots,x_M)}$ have few correlations.
Our theory is a good starting point for cases where these correlations are strong, 
since learning imbalances will also occur in these cases, but now are easier to describe.

\textbf{Training vs testing time mitigation.}\;
\CAROT~is a more lightweight technique, however, it may lead to lower classification accuracy than \ILP.
On the contrary, \ILP~ \emph{may} increase the training overhead over the state-of-the-art, namely training by applying the top-$k$ SL per training sample \citep{pmlr-v80-xu18h,NeurIPS2023}. This is because when $k$ is fixed, the complexity of computing the SL is polynomial; in contrast, solving \eqref{eq:multi-instance-pll-lp}, which is a linear program calculated out of a batch of samples, is an NP-hard problem. However, when the SL runs \emph{without} approximations and the pre-image of $\sigma$ is very large, the complexity of SL is worst case \#P-complete per training sample \citep{CHAVIRA2008772}, making \eqref{eq:multi-instance-pll-lp} more computationally efficient.
From a computational viewpoint, it is worth stressing that the LP has a linear growth in $\sigma^{-1}$ as we employ the Tseytin transformation \ref{appendix:ilp} to translate from the pre-image into the ILP. The complexity of enumerating $\sigma^{-1}$ is inherent to all relevant \nesy~frameworks \citep{deepproblog-journal,scallop,neurolog}, as they all rely on the pre-image to compute a loss, see the discussion in \citep{NeurIPS2023}. 
To reduce the computational overhead of our ILP-based technique, we could adopt multiple techniques to solve ILP efficiently \citep{ilp-heuristics}.
We could treat the program in \eqref{eq:multi-instance-pll-lp} as a differentiable layer by applying optimization techniques as in \citep{differentiable-convex}. When allowing the variables to take values in $[0,1]$, \eqref{eq:multi-instance-pll-lp} becomes an LP. Then, we can employ the simplex algorithm, which runs quite efficiently in practice \citep{smoothed-analysis}.

We are the first to theoretically characterize and mitigate learning imbalances in \nesy. 
Our characterization complements the existing theory in long-tailed learning, identifying and addressing the unique challenges in \nesy.
Our empirical analysis revealed two topics for future research: \emph{computing marginals for testing time mitigation} and \emph{designing more effective testing time mitigation techniques}.

\bibliography{references}
\bibliographystyle{plain}

\clearpage
\appendix
\appendix 
\onecolumn

\section*{Appendix Organization} 

Our appendix is organized as follows: 
\begin{itemize}
    \item Appendix~\ref{appendix:prelims} introduces notions related to (robust) optimal transport and discusses the relationship between our notation and the notation used in the relevant NSL literature. 
    \item Appendix~\ref{appendix:theory} provides the proofs of all the formal statements in Section~\ref{section:theory} and a more detailed discussion of our error bounds.
    \item Appendix~\ref{appendix:solver} provides the proof of statistical consistency of Algorithm~\ref{alg:solver} and discusses other technical aspects related to Algorithm~\ref{alg:solver}.
    \item Appendix~\ref{appendix:ilp} discusses a nonlinear program formulation of 
    \nesy. In addition, it presents in detail the steps to derive the optimization objective in \eqref{eq:multi-instance-pll-lp}, as well as an example of \eqref{eq:multi-instance-pll-lp} in the context of Example~\ref{example:motivating}.
    \item Appendix~\ref{appendix:RW} presents an extended version of the related work.
    \item Appendix~\ref{appendix:exp-details} provides further details on our empirical analysis and presents results on more benchmarks. 
    \item Tables \ref{table:notation:prelims-theory} and \ref{table:notation-algorithms} summarize our notation. 
\end{itemize}

\section{Extended Preliminaries} \label{appendix:prelims}

\textbf{Optimal transport.} 
Let $Z_1$ and $Z_2$ be two discrete random variables over ${[m_1]}$ and ${[m_2]}$.
For ${i \in [2]}$, vector ${\_b^i \in \mathbb{R}_+^{m_i}}$ denotes the probability distribution of $Z_i$, i.e., ${\-P(Z_i = m_j) = b^i_j}$, for each ${j \in [m_i]}$.
Let $U$ be the set of matrices 
defined as ${\{\_Q \in \mathbb{R}_+^{m_1 \times m_2} | \_Q \_1_{m_1} = \_b^2, \_Q \_1_{m_2} = \_b^1\}}$.
The \emph{optimal transport} (OT) problem \citep{computational-ot} asks us to find the matrix ${\_Q \in U}$ that maximizes a linear object subject to marginal constraints, namely 
\begin{equation}
    \min_{\_Q \in U} \langle \_{P}, \_{Q} \rangle
\end{equation}

Assume that we are strict in enforcing the probability distribution $\_b^1$, but not in enforcing $\_b^2$.  
The \emph{robust semi-constrained optimal transport} (RSOT) problem \citep{robust-ot} aims to find: 
\begin{align}
    \min_{\_Q \in U'} \langle \_{P}, \_{Q}  \rangle + \tau \text{KL}(\_Q \_1_{m_1} || \_b^2)    \label{eq:rsot}
\end{align}
where ${U' = \{\_Q \in \mathbb{R}_+^{m_1 \times m_2} | \_Q \_1_{m_2} = \_b^1\}}$
and $\tau > 0$ is a regularization parameter. The solution to \eqref{eq:rsot} can be approximated in polynomial time using \emph{robust semi-Sinkhorn} from \citep{robust-ot}, which generalizes the classical Sinkhorn algorithm \citep{sinkhorn} for OT.

\textbf{Other \nesy~notation.}
We now show that our notation for \nesy~samples is equivalent to the notation adopted by previous works on the topic \cite{deepproblog-journal,scallop,reasoning-shortcuts}.

Let $\+K$ be a background logical theory that ``sits'' on top of $f$, i.e., it reasons over the predictions of ${f}$. In practice, we may have one or more classifiers, ${f_1, \dots, f_N}$, each with its own input and output domains. To simplify the description, we focus on the single-classifier case. However, both our notation and the notation in \cite{deepproblog-journal,scallop,reasoning-shortcuts} can be trivially extended to support these scenarios.

In previous works, \nesy~training samples may be denoted by $(\_x,\phi)$, where $\_x$ is a set of elements from $\+X$ and $\phi$ is a logical sentence (or a single target fact in the simplest scenario).
The gold labels of the input instances are unknown to the learner. Instead, we only know that the gold labels of the elements in $\_x$ satisfy the logical sentence $\phi$ subject to $\+K$. 
The sentence $\phi$ and the logical theory $\+K$ allow us to ``guess'' what the gold labels of the elements in $\_x$ might be so that $\phi$ is logically satisfied subject to $\+K$. This is essentially the process of \textit{abduction} \cite{neurolog}. To align with the terminology in our paper, for a training sample $(\_x,\phi)$, we use the term \textit{pre-image}\footnote{Pre-images correspond to \textit{proofs} in \cite{neurolog,scallop,ABL,deepproblog-journal}.}
to denote a combination of labels of the elements in $\_x$, such that 
$\phi$ is logically satisfied subject to $\+K$. 
The gold pre-image is the one mapping each instance to its gold label. 
Abduction allows us to ``get rid of'' $\phi$ and $\+K$ and represent each training sample via $\_x$ and its corresponding pre-images, i.e., 
as $(\_x, \{\sigma_{i}\}_{i=1}^{\omega} )$, where each pre-image $\sigma_{i}$ is a mapping from $\_x$ into labels in $\+Y$. 
By assuming a canonical ordering on the elements in $\_x$, we can view each 
$\sigma_{i}(\_x)$ as a vector of labels, one for each element in $\_x$.   
Therefore, we can equivalently see each training sample as a tuple of the form $(\_x, \{\sigma_{i}(\_x)\}_{i=1}^{\omega} )$, supporting our claim that the two notations are equivalent.

\section{Proofs and Details on Section~\ref{section:theory}}\label{appendix:theory}

\subsection{Proofs}

\bound*

\begin{proof}
    This result follows directly from the definition of the program \eqref{eqn:optim}.
\end{proof}

\genBound*

\begin{proof}

    Let $L_\sigma \circ [\+F]$ be the function space that maps a (training) example $(\_x,s)$ to its partial loss defined as follows:
    \begin{equation}
    \begin{aligned}
        L_\sigma \circ [\+F]
        := \{(\_x, s) \mapsto L_\sigma([f](\_x),s) | f \in \+F\}
    \end{aligned}
    \end{equation} 
    The standard generalization bound with VC dimension (see, for example, Corollary 3.19 of \citep{MohriRoTa18}) implies that:
    \begin{equation}
    \begin{aligned}
        R_\:P(f) 
        \le \widehat{R}_\:P(f; \+T_\:P)
        + \sqrt{\frac{2 \log(\e m_\:P/d_\mathrm{VC}(L_\sigma \circ [\+F]))}{m_\:P / d_\mathrm{VC}(L_\sigma \circ [\+F])}}
        + \sqrt{\frac{\log(1/\delta)}{2m_\:P}}
    \end{aligned}
    \end{equation}
    where $d_\mathrm{VC}(\cdot)$ is the VC dimension.
    For simplicity, let $d = d_\mathrm{VC}(L_\sigma \circ [\+F])$ and $d_{[\+F]}$ be the Natarajan dimension of $[\+F]$.
        Using a similar argument as in \citep{NeurIPS2023}, given any $d$ samples in $\+X^M \times \+O$ using ${[\+F]}$, we let $N$ be the maximum number of distinct ways to assign label vectors (in $\+Y^M$) to these $d$ samples. 
    Then, the definition of VC-dimension implies that:
    \begin{equation}
    \begin{aligned}
        2^d
        \le N
    \end{aligned}
    \end{equation}
    On the other hand, these $d$ samples contain $Md$ input instances in $\+X$. By Natarajan’s lemma (see, for example, Lemma 29.4 of \citep{understanding-ml}), we have that:
    \begin{equation} \label{Natarajan-count}
        N \le (Md)^{d_{[\+F]}}c^{2d_{[\+F]}}
    \end{equation}
    Combining \eqref{Natarajan-count} with the above equations, it follows that
    \begin{equation}
    \begin{aligned}
        (Md)^{d_{[\+F]}}c^{2d_{[\+F]}} 
        \ge N \ge 2^d
    \end{aligned}
    \end{equation}
    Taking the logarithm on both sides, we have that:
    \begin{equation}
    \begin{aligned}
        d_{[\+F]}\log (Md) +2d_{[\+F]}\log c \ge d\log2
    \end{aligned}
    \end{equation}
    Taking the first-order Taylor series expansion of the logarithm function at the point $6d_{[\+F]}$, we have: 
    \begin{equation}
    \begin{aligned}
        \log(d)
        \le \frac{d}{6d_{[\+F]}} + \log(6d_{[\+F]}) - 1
    \end{aligned}
    \end{equation}
    Therefore,
    \begin{equation}
    \begin{aligned}
        d \log 2 
        & \le d_{[\+F]}\log d + d_{[\+F]} \log M + 2 d_{[\+F]} \log c \\ 
        & \le d_{[\+F]}\left(\frac{d}{6d_{[\+F]}} + \log(6d_{[\+F]}) - 1\right) + d_{[\+F]} \log M + 2 d_{[\+F]} \log c \\ 
        & = \frac{d}{6} + d_{[\+F]} \log(6Mc^2d_{[\+F]} / \e)
    \end{aligned}
    \end{equation}
    Rearranging the inequality yields
    \begin{equation}
    \begin{aligned}
        d 
        & \le \frac{d_{[\+F]} \log(6Mc^2d_{[\+F]} / \e) }{\log 2 - 1/6} \\
        & \le 2 d_{[\+F]} \log(6Mc^2d_{[\+F]} / \e) 
    \end{aligned}
    \end{equation}
    as claimed.
\end{proof}

\mUnambBound*

\begin{proof}
Since $\_w := \sum_{i=1}^c r_i\_w_i$, we have $R(f) = \_w^\top \_h$. 
Then, we consider the following relaxed program:
\begin{equation}
\label{eqn:relaxed-problem}
    \begin{aligned}
        \max_\_h \quad          & \_w^\top \_h \\ 
        \mathrm{s.t.} \quad   &  \_h^\top D(\_\Sigma_{\sigma,\_r}) \_h \le R_\:P
    \end{aligned}
\end{equation}
where $D(\_\Sigma_{\sigma,\_r})$ is the diagonal part of $\_\Sigma_{\sigma,\_r}$, namely:
\begin{equation}
\begin{aligned}
    D(\_\Sigma_{\sigma,\_r})
    = [r_ir_j \-1\{i = j\} \-1\{i \not\equiv j \;(\mathrm{mod}\; c)\}]_{i \in [c^2], j \in [c^2]}
\end{aligned}
\end{equation}
In other words, $D(\_\Sigma_{\sigma,\_r})$ encodes all the partial risks caused by repeating the same type of misclassification twice.
On the other hand, the $M$-unambiguity condition ensures that each type of misclassification, when repeated twice, leads to a misclassification of the weak label.
Therefore, $\_w \in \mathrm{Range}(D(\_\Sigma_{\sigma,\_r}))$.

The problem in \eqref{eqn:relaxed-problem} is a special case of the single-constraint quadratic optimization problem. 
Then, the fact that $\_w \in \mathrm{Range}(D(\_\Sigma_{\sigma,\_r}))$ implies that the dual function of this problem (with dual variable $\lambda$) is
\begin{equation}
\begin{aligned}
    g(\lambda)
    = \lambda R_\:P + \frac{\_w^\top (D(\_\Sigma_{\sigma,\_r}))^{\dagger}\_w}{4\lambda}
\end{aligned} 
\end{equation}
where $(D(\_\Sigma_{\sigma,\_r}))^{\dagger}$ is the pseudo-inverse, namely
\begin{equation}
    \begin{aligned}
        (D(\_\Sigma_{\sigma,\_r}))^{\dagger}
        = [(r_ir_j)^{-1} \-1\{i = j\} \-1\{i \not\equiv j \;(\mathrm{mod}\; c)\}]_{i \in [c^2], j \in [c^2]}
    \end{aligned}
\end{equation}
Therefore,
\begin{equation}
\begin{aligned}
    \_w^\top (D(\_\Sigma_{\sigma,\_r}))^{\dagger}\_w
    = c(c-1)
\end{aligned}
\end{equation}
According to Appendix B of \citep{boyd2004convex}, strong duality holds for this problem. Therefore, the optimal value is given exactly as
\begin{equation}
\begin{aligned}
    \inf_{\lambda \ge 0} g(\lambda)
    = 2\sqrt{\frac{c(c-1)}{4} R_\:P}
    = \sqrt{c(c-1) R_\:P}
\end{aligned}
\end{equation}
as claimed.
\end{proof}

\subsection{Further Discussion on the Proposed Risk Bounds} \label{appendix:bounds-discussion}
\begin{figure}
\begin{minipage}{0.5\textwidth}
    \centering
    \includegraphics[width=\linewidth]{images/bound-uniform-new.png}
  \end{minipage}\hfill
  \begin{minipage}{0.5\textwidth}
    \centering
    \includegraphics[width=\linewidth]{images/bound-tailed-new.png}
  \end{minipage}
\vspace{-0.2cm}
\caption{Class-specific upper bounds obtained via \eqref{eqn:optim}. (left) ${\+D_{Y}}$ is uniform.
(right) ${{\+D_{\:P_S}}}$ is uniform. (Enlarged version of Figure~\ref{Fig:bounds}).}
\end{figure}
\label{Fig:bounds-enlarged}
    
Intuitively, the difficulty of learning is affected by (i) the distribution of weak labels in $\_D_{\:P}$ and (ii) the size of the pre-image of $\sigma$ for each weak label. These two factors are reflected in our risk-specific bounds. Let us continue with the analysis in Example~\ref{example:motivating-cont}.

\begin{example} [Cont' Example~\ref{example:motivating-cont}] \label{example:appendix}
    Let us start with \textsc{Case 1}. In this case, our class-specific error bounds suggest that learning the zero class is more difficult than learning nine class, despite the fact that both hidden labels $y_1$ and $y_2$ are uniform in ${\{0,\dots,9\}}$, see the left side of Figure~\ref{Fig:bounds-enlarged}. The root cause of this learning imbalance is $\sigma$ and its characteristics. In particular, 
    the weak labels that result after independently drawing pairs of MNIST digits and applying $\sigma$ on their gold labels are long-tailed, with $s=0$ occurring with probability $1/100$ and $s=9$ occurring with probability $17/100$ in the training data. Hence, we have more supervision to learn class nine than to learn zero.  

    Now, let us focus on \textsc{Case 2}. In this case, our class-specific bounds suggest that learning class zero is the easiest to learn -- see right side of Figure~\ref{Fig:bounds-enlarged}. This is due to two reasons. First, the weak labels follow the same uniform distribution. Hence, we have the same amount of supervision to learn all classes. Second, the pre-image of $\sigma$ for different weak labels is very different. Regarding the second reason, the weak label $s=0$ provides much stronger supervision than the weak label $s=9$: when $s=0$, we have direct supervision ($s=0$ implies $y_1=y_2=0$); in contrast, when $s=9$ this only means that $y_1=9$ and $y_2$ is any label in ${\{0,\dots,9\}}$, or vice versa.
\end{example}
The above shows that $\sigma$ (i) can lead to imbalances in the weak labels even if the hidden labels are uniformly distributed and (ii) may provide supervision signals of very different strengths. Hence, learning in \nesy~is \textit{inherently imbalanced} due to $\sigma$.

\subsection{Details on Plotting Figure~\ref{Fig:bounds}}

In this subsection, we describe the steps we followed to create the plots in Figure~ \ref{Fig:bounds}. We produced the curves shown in each figure by plotting 20 evenly spaced points within the partial risk interval $R_\:P \in [0,0.2]$. To obtain the value of the classification risk at each point, we solved the optimization program \eqref{eqn:optim} using the COBYLA optimization algorithm implemented by the \texttt{scipy.optimize} package. To mitigate numerical instability, for each point, we ran the optimization solver ten times and then dropped invalid results that were not in the range $[0,1]$. The median of the remaining valid results was then taken as the solution to \eqref{eqn:optim}.
\section{Further details on Algorithm~\ref{alg:solver}}
\label{appendix:solver}

\begin{figure}[tb]
    \centering
    \includegraphics[width=0.4\textwidth]{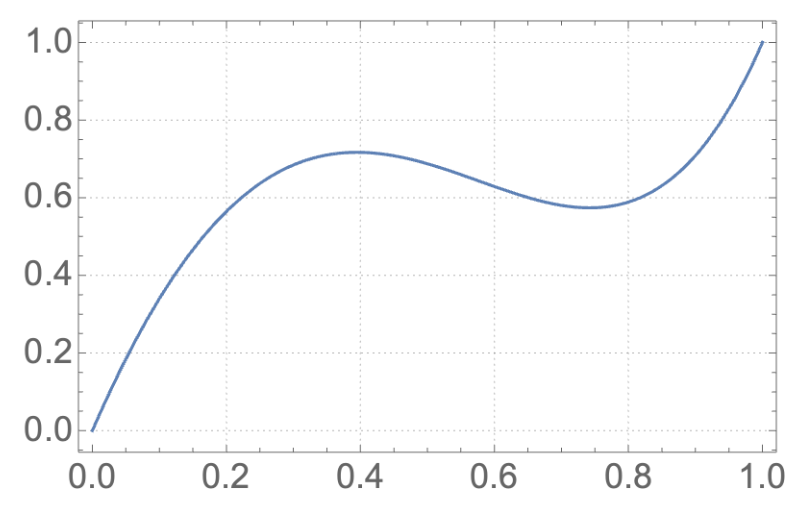}
    \caption{Plot of function $t \mapsto t^4 + 6t^2(1-t)^2 + 4t(1-t)^3$.} \label{figure:counter-plot}
\end{figure}

\textbf{Proof of statistical consistency of Algorithm\ref{alg:solver}.}
The approximation $\hat{\_r}$ given by Algorithm~\ref{alg:solver} can be viewed as a method to find the maximum likelihood estimation whose consistency is guaranteed under suitable conditions. The most critical is the invertibility of $\Psi_\sigma$.
The invertibility is satisfied by practical transitions as the one in Example~\ref{example:motivating}, but may fail to hold for certain transitions even if the $M$-unambiguity condition \citep{NeurIPS2023} holds. We will provide one such example later in this section. 

Suppose that the backprobagation step in Algorithm~\ref{alg:solver} can find the maximum likelihood estimator.
For a real $\epsilon > 0$, let $\Delta_c^\epsilon$ be the shrinked probability simplex defined as ${\Delta_c^\epsilon := \{\_r \in \Delta_c | r_j \ge \epsilon \,\forall\, j \in [c]\}}$. Let $\hat{\_r}^{*}_{m_\:p}:= \argmin_{\hat{\_r} \in \Delta_c^\epsilon} \sum_{j=1}^{c_S} \bar{p}_j\log[\Psi_{\sigma}(\hat{\_r})]_j$ be the maximum likelihood estimation. We have: 

\begin{restatable}[Consistency]{proposition}{consistency}
    \label{prop:consistency}
    If there exists an $\epsilon > 0$, such that $\_r \in \Delta_c^\epsilon$ and $\Psi_\sigma$ is injective in $\Delta_c^\epsilon$,
    then $\hat{\_r}^{*}_{m_\:p} \rightarrow \_r$ in probability as $m_\:P \rightarrow \infty$. 
\end{restatable}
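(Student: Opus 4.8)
The plan is to recognize $\hat{\_r}^{*}_{m_\:P}$ as an M-estimator and invoke a standard consistency argument for extremum estimators (e.g., the argmax continuous mapping / Wald-type consistency theorem). The objective being maximized is $Q_{m_\:P}(\hat{\_r}) := \sum_{j=1}^{c_S} \bar{p}_j \log[\Psi_\sigma(\hat{\_r})]_j$, and its population counterpart is $Q(\hat{\_r}) := \sum_{j=1}^{c_S} p_j \log[\Psi_\sigma(\hat{\_r})]_j$, where $p_j = [\Psi_\sigma(\_r)]_j$ by definition of $\Psi_\sigma$ and the polynomial system $P_\sigma$. First I would verify that $Q$ has a unique maximizer at $\_r$ over $\Delta_c^\epsilon$: since $\sum_j p_j \log q_j$ is maximized over probability vectors $\_q$ uniquely at $\_q = \_p$ (Gibbs' inequality / nonnegativity of KL divergence), and since $\Psi_\sigma$ is injective on $\Delta_c^\epsilon$ by hypothesis, the only $\hat{\_r} \in \Delta_c^\epsilon$ with $\Psi_\sigma(\hat{\_r}) = \_p$ is $\hat{\_r} = \_r$. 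This gives well-separated identification: for every open neighborhood $U$ of $\_r$, $\sup_{\hat{\_r} \in \Delta_c^\epsilon \setminus U} Q(\hat{\_r}) < Q(\_r)$, using compactness of $\Delta_c^\epsilon \setminus U$ and continuity of $Q$ (note each $[\Psi_\sigma(\cdot)]_j$ is continuous and bounded below by a positive constant on $\Delta_c^\epsilon$ because it is a polynomial with nonnegative coefficients evaluated at coordinates $\ge \epsilon$, so $\log[\Psi_\sigma(\cdot)]_j$ is continuous and finite).

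Next I would establish uniform convergence $\sup_{\hat{\_r} \in \Delta_c^\epsilon} |Q_{m_\:P}(\hat{\_r}) - Q(\hat{\_r})| \to 0$ in probability. This is immediate here because the dependence on the data is only through the finite-dimensional empirical frequency vector $\bar{\_p} = (\bar p_1,\dots,\bar p_{c_S})$: we have $|Q_{m_\:P}(\hat{\_r}) - Q(\hat{\_r})| \le \sum_{j=1}^{c_S} |\bar p_j - p_j| \cdot |\log[\Psi_\sigma(\hat{\_r})]_j| \le \big(\max_{\hat{\_r},j} |\log[\Psi_\sigma(\hat{\_r})]_j|\big) \sum_{j=1}^{c_S} |\bar p_j - p_j|$, and the bracketed constant is finite (continuous function on the compact set $\Delta_c^\epsilon$, bounded below away from zero as noted). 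By the law of large numbers, $\bar p_j \to p_j$ a.s.\ for each $j \in [c_S]$, hence $\sum_j |\bar p_j - p_j| \to 0$ in probability, which bounds the supremum uniformly over $\hat{\_r}$.

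With well-separated identification plus uniform convergence, the standard extremum-estimator consistency argument applies: from $Q_{m_\:P}(\hat{\_r}^{*}_{m_\:P}) \ge Q_{m_\:P}(\_r)$ and the two-sided uniform bound one gets $Q(\hat{\_r}^{*}_{m_\:P}) \ge Q(\_r) - o_P(1)$, and the separation condition then forces $\hat{\_r}^{*}_{m_\:P} \in U$ with probability tending to one, for any neighborhood $U$; since $U$ is arbitrary this is exactly convergence in probability to $\_r$. I would spell this out as: for fixed $\eta > 0$ let $U = \{\hat{\_r} : \|\hat{\_r} - \_r\| < \eta\}$, set $\delta := Q(\_r) - \sup_{\hat{\_r}\notin U} Q(\hat{\_r}) > 0$, and observe that on the event $\{\sup_{\hat{\_r}}|Q_{m_\:P} - Q| < \delta/2\}$ (which has probability $\to 1$) the maximizer cannot lie outside $U$.

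The main obstacle — and the place where I would be most careful — is not the probabilistic machinery (which is routine once the structure above is in place) but the role of the injectivity hypothesis and the need that $\Psi_\sigma$ genuinely land in the simplex with coordinates bounded away from zero on $\Delta_c^\epsilon$, so that the $\log$ terms are well-defined and the continuity/compactness arguments go through; the discussion preceding the statement (and Figure~\ref{figure:counter-plot}) already flags that injectivity can fail even under $M$-unambiguity, so the hypothesis $\_r \in \Delta_c^\epsilon$ with $\Psi_\sigma$ injective on $\Delta_c^\epsilon$ is doing real work and must be used exactly at the identification step. A secondary subtlety is the gap between the idealized estimator $\hat{\_r}^{*}_{m_\:P}$ analyzed here and the actual output of Algorithm~\ref{alg:solver} (gradient descent on the softmax reparametrization); this is why the statement is phrased under the assumption that backpropagation finds the maximum-likelihood estimator, and I would simply note that this reparametrization maps onto the interior of $\Delta_c$ and leave the optimization-landscape question aside.
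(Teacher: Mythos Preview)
Your proposal is correct and follows the same overall M-estimator consistency framework as the paper: establish uniform convergence of the empirical objective to the population one, establish identification (a well-separated maximum), and conclude via the argmax/Wald-type argument, which the paper invokes by citing Theorem~5.7 of van der Vaart. The one place where your route genuinely differs is the uniform-convergence step. The paper notes that $-\log[\Psi_\sigma(\hat{\_r})]_j$ is bounded on the compact parameter set (using that each coordinate of $\Psi_\sigma(\hat{\_r})$ is at least $\epsilon^M$) and then appeals to a general uniform law of large numbers; you instead exploit the finite-dimensional structure directly---since the sample objective depends on the data only through the empirical frequencies $\bar{\_p}$, the bound $\sup_{\hat{\_r}}|Q_{m_\:P}-Q| \le C\,\|\bar{\_p}-\_p\|_1$ reduces uniform convergence to the ordinary law of large numbers for $\bar{\_p}$. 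Your argument is more elementary and self-contained; the paper's is shorter on the page but defers the work to cited results. A second, smaller difference: the paper parametrizes over the image $\Psi_\sigma(\Delta_c^\epsilon)$, first concludes $\Psi_\sigma(\hat{\_r}^*_{m_\:P}) \to \_p$, and then pulls back through injectivity (continuous inverse on a compact domain), whereas you work directly on $\Delta_c^\epsilon$ and use Gibbs' inequality plus injectivity at the identification step---functionally equivalent, but your version makes the role of the injectivity hypothesis more explicit.
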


\begin{proof}
    Let $\Delta_{c_S}^{\sigma,\epsilon} := \{\Psi_\sigma(\_r) | \_r \in \Delta_c^\epsilon\}$ be the image of $\Psi_\sigma$ on $\Delta_c^\epsilon$.
    The set $\Delta_{c_S}^{\sigma,\epsilon}$ is a compact subset in $\-R^{c_S}$.
    For any weak label $a_j \in \+S$, let $H(a_j,\_r):= -\log([\Psi_\sigma (\_r)]_j)$ be the point-wise log-likelihood.
    The $M$-unambiguity condition ensures that each coordinate of every vector in $\Delta_{c_S}^{\sigma,\epsilon}$ should be at least $\epsilon^M$, and hence the function $H$ is bounded on $\Delta_{c_S}^{\sigma,\epsilon}$.
    By Theorem 1 of \citep{asymptotic-paper}, this ensures that $\sum_{s} H(s, \_r)$ converges uniformly to $\-E_S[H(S,\_r)]$.
    According to \citep{Vaart_1998} (Theorem 5.7), the uniform convergence further ensures that $\Psi_\sigma(\hat{\_r}^{*}_{m_\:p}) \rightarrow \_p$ in probability as $m_\:P \rightarrow \infty$.
    Since $\Psi_\sigma$ is invertible, this implies that $\hat{\_r}^{*}_{m_\:p} \rightarrow \_r$ in probability.
\end{proof}

\textbf{Counterexample where the invertibility of $\Psi_\sigma$ does not hold.} 
Consider the following transition function for binary labels ($\+Y=\{0,1\}$) and $M=4$: 
\begin{equation}
\begin{aligned}
    \sigma(y_1, y_2, y_3, y_4)
    = \left\{
        \begin{aligned}
            1 & , \quad \sum_{i=1}^4 y_i \in \{1,2,4\}\\ 
            0 & , \quad\text{otherwise}
        \end{aligned}
    \right.
\end{aligned}
\end{equation}
The $M$-unambiguity condition \citep{NeurIPS2023} holds since $\sigma(0,0,0,0) \ne \sigma(1,1,1,1)$. 
On the other hand, the probability the weak label is one can be expressed as:
\begin{equation}
\begin{aligned}
    \-P(s = 1) 
    = r_1^4 + 6r_1^2r_0^2 + 4r_1r_0^3
    = r_1^4 + 6r_1^2(1-r_1)^2 + 4r_1(1-r_1)^3
\end{aligned}
\end{equation}
which is not an injection, see the plot of function ${t \mapsto t^4 + 6t^2(1-t)^2 + 4t(1-t)^3}$ in Figure~\ref{figure:counter-plot}.
\section{Details on Section~\ref{section:ilp}}
\label{appendix:ilp}

\subsection{A Nonlinear Program Formulation}

A straightforward idea that accommodates the requirements set in Section~\ref{section:ilp} is to reformulate \eqref{eq:rsot} by (i) extending $\_P$ (resp. $\_Q$) to a tensor of size ${n \times c \times M}$ to store the scores (resp. pseudolabels) of $M$-ary tuples of instances and (ii) modifying $U'$ so that the product of the combinations of entries in $\_Q$ corresponding to invalid label assignments is forced to zero. However, modifying $U'$ in this way, we cannot employ Sinkhorn-like techniques as the one in \citep{Complexity-Approximating-Multimarginal-OT}, leaving us only with the option to employ nonlinear\footnote{Nonlinearity comes from the KL term and by enforcing invalid label combinations to have product equal to zero.} programming techniques to find $\_Q$. 

\mychange{
\subsection{Deriving the Linear Program in \eqref{eq:multi-instance-pll-lp}}

Let ${(x_{\ell,1},\dots,x_{\ell,M}, s_\ell)}$ denote the $\ell$-th \nesy~training sample, where ${\ell \in [n]}$.
To derive the linear program in \eqref{eq:multi-instance-pll-lp}, we associate each weak label $s_\ell$ with a DNF formula $\Phi_\ell$, a process that is standard in the neurosymbolic literature \citep{pmlr-v80-xu18h,neurolog,scallop,NeurIPS2023}.
To ease the presentation, we describe how to compute $\Phi_\ell$.  
Let ${\{\_y_{\ell,1}, \dots, \_y_{\ell,R_{\ell}}\}}$ be the set of vectors of labels in $\sigma^{-1}(s_\ell)$. We associate each prediction with a Boolean variable. Namely, let $q_{\ell,i,j}$ be a Boolean variable that becomes true when $x_{\ell,i}$ is assigned with label $j \in \+Y$. 
Via associating predictions with Boolean variables, each $\_y_{\ell,t}$ can be associated with a conjunction $\varphi_{\ell,t}$ over Boolean variables from ${\{q_{\ell,i,j} | i \in [M], j \in [c]\}}$.
In particular, $q_{\ell,i,j}$ occurs in $\phi_{\ell,t}$ only if the $i$-th label in ${\_y_{\ell,t}}$ is $j \in \+Y$. Consequently, the training sample ${(x_{\ell,1},\dots,x_{\ell,M}, s_\ell)}$ is associated with the DNF formula $\Phi_{\ell} = \bigvee_{r=1}^{R_\ell} \varphi_{\ell,t}$ that encodes all 
vectors of labels in $\sigma^{-1}(s_\ell)$.
We assume a canonical ordering over the variables occurring in $\varphi_{\ell,t}$, using $\varphi_{\ell,t,j}$ to refer to the $j$-th variable, and use ${|\varphi_{\ell,t}|}$ to denote the number of (unique) Boolean variables occurring $\varphi_{\ell,t}$. Based on the above, we have ${\varphi_{\ell,t} = \bigwedge_{k=1}^{|\varphi_{\ell,t}|} \varphi_{\ell,t,k}}$. 

Similarly to \citep{ILP-cookbook}, we use the Iverson bracket ${[]}$ to map Boolean variables to their corresponding integer ones, e.g., $[q_{\ell,i,j}]$, denotes the integer variable associated with the Boolean variable 
$q_{\ell,i,j}$.

We are now ready to construct linear program \eqref{eq:multi-instance-pll-lp}.
Notice that the solutions of this program capture the label assignments 
that abide by $\sigma$, i.e., 
the labels assigned to each ${(x_{\ell,1},\dots,x_{\ell,M})}$ should be either of 
${\_y_{\ell,1}, \dots, \_y_{\ell,R_{\ell}}}$. 
The steps of the construction are (see \citep{ILP-cookbook}): 
\begin{itemize}
    \item (\textsc{Step 1}) We translate 
each $\Phi_{\ell}$ into a CNF formula $\Phi'_{\ell}$ via the Tseytin transformation \citep{tseitin} to avoid the exponential blow up of the (brute force) DNF to CNF conversion. 
    \item (\textsc{Step 2}) We add the corresponding linear constraints out of each subformula in $\Phi'_{\ell}$.
\end{itemize}

Given $\Phi_{\ell} = \bigvee_{r=1}^{R_\ell} \varphi_{\ell,t}$, 
the Tseytin transformation associates a fresh Boolean variable $\alpha_{\ell,t}$
with each disjunction ${\varphi_{\ell,t}}$ in $\Phi_{\ell}$ and 
rewrites $\Phi_{\ell}$ into the following logically equivalent formula: 
\begin{align}
    \Phi'_{\ell} := & \underbrace{\bigvee_{t=1}^{R_\ell} \alpha_{\ell,t}}_{\Psi_\ell} 
    \wedge \bigwedge_{t=1}^{R_\ell} \left( \alpha_{\ell,t} \leftrightarrow  \varphi_{\ell,t} \right)\label{eq:cnf}
\end{align}
After obtaining $\Phi'_{\ell}$, the construction of \eqref{eq:multi-instance-pll-lp} proceeds as follows.  
The first inequality that will be added to \eqref{eq:multi-instance-pll-lp} comes from formula $\Psi_\ell$. In particular, it will be the inequality ${\sum_{t=1}^{R_\ell} [\alpha_{\ell,t}] \geq 1}$, due to  
Constraint (3) from \citep{ILP-cookbook}.
The next inequalities come from the subformula 
${\bigwedge_{t=1}^{R_\ell} \left( \alpha_{\ell,t} \leftrightarrow  \varphi_{\ell,t} \right)}$ from \eqref{eq:cnf}.
The latter can be rewritten to the following two formulas: 
\begin{align}
    \alpha_{\ell,t} &\rightarrow \bigwedge_{k=1}^{|\varphi_{\ell,t}|} \varphi_{\ell,t,k}  \label{eq:fresh-right-implication} \\
    \bigwedge_{k=1}^{|\varphi_{\ell,t}|} \varphi_{\ell,t,k} &\rightarrow \alpha_{\ell,t}  \label{eq:fresh-left-implication}
\end{align}
According to Constraint (10) from \citep{ILP-cookbook}, 
\eqref{eq:fresh-right-implication} and \eqref{eq:fresh-left-implication} are associated with the following inequalities:
\begin{align}
    -|\varphi_{\ell,t}| [\alpha_{\ell,t}] + \sum_{k=1}^{|\varphi_{\ell,t}|} [\varphi_{\ell,t,k}] &\geq 0  \label{eq:fresh-right-implication-lp} \\
    - \sum_{k=1}^{|\varphi_{\ell,t}|} [\varphi_{\ell,t,k}] + [\alpha_{\ell,t}] &\geq (1-|\varphi_{\ell,t}|)  \label{eq:fresh-left-implication-lp}
\end{align} 
which will also be added to the linear program. 

Lastly, according to Constraint (5) from \citep{ILP-cookbook}, we have 
an equality ${\sum_{j=1}^{c} [q_{\ell,i,j}] = 1}$, for each ${\ell \in [n]}$ and ${i \in [M]}$. 
The above equality essentially requires the scores of all pseudolabels for a given instance $x_{\ell,i}$ to sum up to one. Finally, we require each pseudolabel 
$[q_{\ell,i,j}]$ to be in $[0,1]$, for each ${\ell \in [n]}$,  ${i \in [M]}$, and ${j \in [c]}$.

Putting everything together, we have the following linear program: 

\begin{equation}
\begin{aligned}
\textbf{minimize } \quad & \min_{(\_{Q}_1,\dots, \_{Q}_m)} \sum_{i=1}^M \langle \_{Q}_i, -\log(\_{P}_i) \rangle,  \\
\textbf{subject to }\quad &
\begin{array}{rll}
\sum_{r=1}^{R_\ell} [\alpha_{\ell,t}] & \geq 1, & \ell \in [n], \\
-|\varphi_{\ell,t}| [\alpha_{\ell,t}] + \sum_{k=1}^{|\varphi_{\ell,t}|} [\varphi_{\ell,t,k}] &\geq 0, & \ell \in [n], t \in [R_\ell] \\
- \sum_{k=1}^{|\varphi_{\ell,t}|} [\varphi_{\ell,t,k}] + [\alpha_{\ell,t}] & \geq -1 (1-|\varphi_{\ell,t}|), & \ell \in [n], t \in [R_\ell] \\
\sum_{j=1}^{c} [q_{\ell,i,j}] & = 1, & \ell \in [n], i \in [M] \\
\left[q_{\ell,i,j}\right] & \in [0,1], & \ell \in [n],  i \in [M], j \in [c] \\
\end{array}
\end{aligned}   
\end{equation}

Program \eqref{eq:multi-instance-pll-lp} results after adding to the above program constraints enforcing the hidden label ratios $\hat{\_r}$. 

\begin{example}
    We demonstrate an example of \eqref{eq:multi-instance-pll-lp} in the context of Example~\ref{example:motivating}.    
    We assume $n=2$. 
    We also assume that the weak labels $s_1$ and $s_2$ of the two \nesy~samples in the batch are equal to $0$ and $1$, respectively.
    Due to the properties of the $\max$, we have:  
    \begin{align}
        \sigma^{-1}(0) &= \{(0,0) \} \\
        \sigma^{-1}(1) &= \{(0,1), (1,0), (1,1)\}
    \end{align}
    and formulas $\Phi_1$ and $\Phi_2$ are defined as: 
    \begin{align}
        \Phi_1 &= \underbrace{q_{1,1,0} \wedge q_{1,2,0}}_{\varphi_{1,1}} \\
        \Phi_2 &= \underbrace{q_{2,1,0} \wedge q_{2,2,1}}_{\varphi_{2,1}} \vee \underbrace{q_{2,1,1} \wedge q_{2,2,0}}_{\varphi_{2,2}} \vee \underbrace{q_{2,1,1} \wedge q_{2,2,1}}_{\varphi_{2,3}}
    \end{align}
    The Tseytin transformation associates the fresh Boolean variables $\alpha_{1,1}$, $\alpha_{2,1}$, $\alpha_{2,2}$, and $\alpha_{2,3}$ to 
    $\varphi_{1,1}$, $\varphi_{2,1}$, $\varphi_{2,2}$, and $\varphi_{2,3}$, respectively, and rewrites $\Phi_1$ and $\Phi_2$ to the following logically equivalent formulas: 
     \begin{align}
        \Phi'_1 &= {\alpha_{1,1}} \wedge (\alpha_{1,1} \leftrightarrow \varphi_{1,1}) \\
        \Phi'_2 &= {(\alpha_{2,1} \vee \alpha_{2,2} \vee \alpha_{2,3})} \wedge 
        (\alpha_{2,1} \leftrightarrow \varphi_{2,1}) \wedge 
        (\alpha_{2,2} \leftrightarrow \varphi_{2,2}) \wedge 
        (\alpha_{2,3} \leftrightarrow \varphi_{2,3})
    \end{align}
    The linear constraints that are added due to $\Phi'_1$ are:
    \begin{equation}
    \begin{aligned}
    \begin{array}{rll}
    [\alpha_{1,1}] & \geq 1 \\
    -|\varphi_{1,1}| [\alpha_{1,1}] +  [q_{1,1,0}] + [q_{1,2,0}] & \geq 0\\
    -([q_{1,1,0}] + [q_{1,2,0}]) + [\alpha_{1,1}] & \geq -1 (1-|\varphi_{1,1}|) & \\
    \end{array}
    \end{aligned}   
    \end{equation}
    The linear constraints that are added due to $\Phi'_2$ are:
    \begin{equation}
    \begin{aligned}
    \begin{array}{rll}
    [\alpha_{2,1}] + [\alpha_{2,2}] + [\alpha_{2,3}]  & \geq 1 \\
    -|\varphi_{2,1}| [\alpha_{2,1}] + [q_{2,1,0}] + [q_{2,2,1}] &\geq 0\\
    -|\varphi_{2,2}| [\alpha_{2,2}] + [q_{2,1,1}] + [q_{2,2,0}] &\geq 0\\
    -|\varphi_{2,3}| [\alpha_{2,3}] + [q_{2,1,1}] + [q_{2,2,1}] &\geq 0\\
    - ([q_{2,1,0}] + [q_{2,2,1}]) + [\alpha_{2,1}] & \geq -1 (1-|\varphi_{2,1}|) & \\
    - ([q_{2,1,1}] + [q_{2,2,0}]) + [\alpha_{2,2}] & \geq -1 (1-|\varphi_{2,2}|) & \\
    - ([q_{2,1,1}] + [q_{2,2,1}]) + [\alpha_{2,3}] & \geq -1 (1-|\varphi_{2,3}|) & \\
    \end{array}
    \end{aligned}   
    \end{equation}

    Finally, the requirement that the pseudolabels for each instance $x_{\ell,i}$ to sum up to one, for $\ell \in [2]$ and $i \in [2]$,  and to lie in ${[0,1]}$ introduces the following linear constraints: 
    \begin{equation}
    \begin{aligned}
    \begin{array}{rll}
    \sum_{j=0}^{9} [q_{1,1,j}] & = 1\\
    \sum_{j=0}^{9} [q_{1,2,j}] & = 1\\
    \sum_{j=0}^{9} [q_{2,1,j}] & = 1\\
    \sum_{j=0}^{9} [q_{2,2,j}] & = 1\\
    \left[q_{1,i,j}\right] & \in [0,1], & i \in [2], j \in \{0,\dots,9\} \\
    \left[q_{2,i,j}\right] & \in [0,1], & i \in [2], j \in \{0,\dots,9\} \\
    \end{array}
    \end{aligned}   
    \end{equation}
    
\end{example}
}
\section{Extended Related Work}
\label{appendix:RW}

\textbf{\nesy}.
\nesy~quite often arises in NSL \citep{DBLP:journals/corr/abs-1907-08194,wang2019satNet,ABL,neurasp,neurolog,deepproblog-approx,scallop,iclr2023,laser}. However, we are the first to study the phenomenon of learning imbalances. 
Below we discuss some recent theoretical results~\citep{reasoning-shortcuts,bears,NeurIPS2023}. The work in \citep{reasoning-shortcuts,bears} deals with the problem of characterizing and mitigating \textit{reasoning shortcuts}. Intuitively, a reasoning shortcut is a classifier that has small partial risk but high classification risk. For example, a reasoning shortcut is a classifier that has good accuracy in the overall task of returning the maximum of two MNIST digits, but has low accuracy in classifying MNIST digits. The work in \citep{reasoning-shortcuts} showed that current \nesy~techniques are vulnerable to reasoning shortcuts. However, the work does not provide (class-specific) error bounds or any theoretical characterization of learning imbalances.  
The authors in \citep{NeurIPS2023} proposed necessary and sufficient conditions that ensure learnability of MI-PLL and provided error bounds for a state-of-the-art neurosymbolic loss under approximations \citep{scallop}. Our theoretical analysis extends the analysis in \citep{NeurIPS2023} by providing (i) class-specific risk bounds (in contrast to \citep{NeurIPS2023}, which only bounds $R(f)$) and (ii) stricter bounds for $R(f)$. In particular, as we show in Proposition~\ref{corolllary:bound}, we can recover the bound from Lemma 1 in \citep{NeurIPS2023} by relaxing \eqref{eqn:optim}. 

\textbf{Long-tailed learning}.
The term \emph{long-tailed learning} has been used to describe settings in which instances of some classes occur very frequently in the training set, with other classes being underrepresented. The problem has received considerable attention in supervised learning, with the proposed techniques operating at training or testing time. Techniques in the former category typically work by reweighting the losses computed using the original training samples \citep{Label-Distribution-Aware-Margin-Loss,tan2020equalization,tan2021eqlv2} or by over- or under-sampling during training \citep{10.5555/1622407.1622416,systematic-study-imbalance}. The techniques in the latter category work by modifying the classifiers' scores at testing time and using the modified scores for classification \citep{DBLP:conf/iclr/KangXRYGFK20,ot-pll}, with 
\LA~being one of the most well-known techniques \citep{LA}. 
\LA~modifies the classifier's scores during testing time by subtracting the (unknown) gold ratios.
In particular, the prediction of the classifier $f$ given input $x$ is given by ${arg\max_{j \in [c]} f^j(x) - \ln(r_j) }$.
Our empirical analysis shows that \CAROT~is more effective than \LA. 

The most relevant to our work is the study in \citep{ot-pll}. Unlike \CAROT, the authors in \citep{ot-pll} focus on PLL and use an optimal transport formulation \citep{computational-ot} to adjust the scores of the classifier
assuming that the marginal $\_r$ is known.
In contrast, \CAROT~ relies on the assumption that $\hat{\_r}$ may be noisy, resorting to a robust optimal transport formulation \citep{robust-ot} to improve the classification accuracy in these cases.

\textbf{PLL.}
In PLL \citep{cour2011,progressive-pll,provably-consistent-pll}, each training sample is a tuple of the form $(x,\{l_1,\dots,l_n\})$, 
where $x \in \+X$ and $l_1,\dots,l_n$ is a set of candidate, mutually exclusive labels for $x$ that includes the gold label of $x$. 
Since (1) each \nesy~training sample is represented as a tuple of the form 
$\_x,\sigma^{-1}(s)$, see Section~\ref{section:preliminaries}, where each element in $\sigma^{-1}(s)$ is a vector of candidate labels for the elements in $\_x$, (2) the vectors in $\_x,\sigma^{-1}(s)$ are mutually exclusive, and (3)  $\sigma^{-1}(s)$ includes the gold labels $\_y$ for the elements in $\_x$, we can see that PLL reduces \nesy (and MI-PLL) when restricting to input vectors of one label only.  

The observation that certain classes are harder to learn than others dates back to the work of \citep{cour2011} in the context of PLL.
We are the first to provide such results for \nesy, also unveiling the relationship between $\sigma$ and class-specific risks. 

\textbf{Long-tailed PLL.}
A few recently proposed papers lie 
in the intersection of long-tailed learning and standard PLL, namely \citep{pmlr-v157-liu21f}, RECORDS \citep{long-tailed-pll-dynamic-rebalancing} and \solar~\citep{solar}, with the first one focusing on non-deep learning settings.  
RECORDS modifies the classifier's scores following the same idea with \LA~and uses the modified scores for training. However, it uses a momentum-updated prototype feature to estimate $\hat{\_r}$. \RECORDS's design allows it to be used with any loss function and be trivially extended to support \nesy. 
Our empirical analysis shows that \RECORDS~is less effective than \CAROT, leading to lower classification accuracy when the same loss is adopted during training.   

\solar~ shares some similarities with \ILP. In particular, given single-instance PLL samples of the form $\{(x_1,S_1),\dots,(x_n,S_n)\}$, where each $S_\ell \subseteq \+Y$ is the weak label of the $\ell$-th PLL sample\footnote{In standard PLL, each weak label is a subset of classes from $\+Y$.}, \solar~ finds pseudolabels $\_Q$ by solving the following linear program:
\begin{align}
\min_{\_Q \in \Delta} &\langle \_Q, -\log(\_P) \rangle \label{eq:Delta}\\
\text{s.t.} \quad \Delta &:= \left\{ [q_{\ell, j}]_{n \times c} \mid \_Q^\top \_1_n = \hat{\_r}, \; \_Q \_1_c = \_c, \; q_{\ell,j} = 0 \; \text{if} \; j \notin S_\ell \right\} \subseteq [0,1]^{n \times c} \nonumber 
\end{align}
The program \eqref{eq:Delta} shows that the information of each weak label $S_\ell$ is strictly encoded into $\Delta$. 
To directly extend \eqref{eq:Delta} to \nesy, we have two options: 
\begin{itemize}
    \item Use an $n \times c^M$ tensor $\_P$ to store the scores of the classifier, where the cell $P[\ell,j_1,\dots,j_c]$ stores the scores of the classifier for the label vector $(j_1,\dots,j_c)$ associated with the $\ell$-th training \nesy~sample, for ${1 \leq \ell \leq n}$. However, that formulation would require an excessively large tensor, especially when $M$ becomes larger.

    \item Use separate tensors $\_P_1,\dots,\_P_M$ to represent the model's scores of the $M$ instances, and set for each ${1 \leq \ell \leq n}$, the product ${P_1[\ell,j_1] \times \dots \times P_M[\ell,j_c]}$ to be $0$ if $(j_1,\dots,j_c)$ does not belong to $\sigma^{-1}(s_\ell)$.
    However, that formulation would lead to a non-linear program. 
\end{itemize}
Neither choice is scalable for \nesy~when $M$ is large\footnote{Yet another non-linear formulation is presented in Section~\ref{appendix:ilp} based on RSOT (see Section~\ref{appendix:prelims}).}.
Our work overcomes these issues by translating the information in the weak labels into linear constraints, leading to an \ILP~formulation.
Another difference between \solar~ and our work is that we developed Algorithm~\ref{alg:solver}to estimate the ratios of the hidden labels, while \solar~ employs a window averaging technique to estimate
${\_r}$ based on the model's scores \citep{solar}.
Finally, although \CAROT~also uses a linear programming formulation with a Sinkhorn-style procedure, it differs from \solar~ in that it adjusts the classifier's scores at testing time rather than assigning pseudolabels at training time.

\textbf{Constrained learning.} 
\nesy~is closely related to constrained learning, in the sense that the predicted label vector $\_y$ should adhere to the constraint $\sigma(\_y) = s$.
Training classifiers under constraints has been well studied in NLP \citep{relaxed-supervision,pmlr-v48-raghunathan16,spigot1,spigot2,word-problem1,word-problem2,paired-examples,MCTR19}. 
The work in \citep{roth2007global} proposes a formulation for training under linear constraints; \citep{UEM} proposes a Unified Expectation Maximization (UEM) framework that unifies several techniques, including CoDL \citep{ChangRaRo07} and Posterior Regularization \citep{posterior}.
The UEM framework was also adopted by \citep{iclr2023} for NSL. 
Our \ILP~formulation is orthogonal to UEM -- it could be integrated with UEM, though.

The theoretical framework for constrained learning in \citep{WHNKR23} provides a generalization theory that suggests that encoding the constraints during both training and testing results in a better model compared to encoding the constraints only during testing. This theory could be extended to explain the advantages of \ILP-based techniques and to characterize the necessary conditions for \CAROT~ to improve model performance.

\textbf{Other weakly supervised settings.}
Another well-known weakly supervised learning setting is that of Multi-Instance Learning (MIL). In MIL, instances are not individually labelled but grouped into sets that contain at least one positive instance, or only negative instances, and the aim is to learn a \textit{bag classifier} \citep{JMLR:v13:sabato12a,sabato-mil-bags}. In contrast, in \nesy, instances are grouped into tuples, with each tuple of instances being associated with a set of mutually exclusive label vectors, and the aim is to learn an \emph{instance classifier}.

\section{Further Experiments and Details}
\label{appendix:exp-details}

\begin{listing}[t]\footnotesize
\begin{minted}{prolog}
land_transportation :- automobile, truck
other_transportation :- airplane, ship
transportation :- land_transportation, other_transportation
home_land_animal :- cat, dog 
wild_land_animal :- deer, horse 
land_animal :- home_land_animal, wild_land_animal
other_animal :- bird, frog 
animal :- land_animal, other_animal 
entity :- transportation, animal  
\end{minted}
\caption{Theory for the \SMALLESTPARENT~benchmark.}
\label{listing:dp}
\end{listing}

\mychange{
\textbf{Why using SL and Scallop.}
SL \citep{pmlr-v80-xu18h,deepproblog-journal} has become the state-of-the-art approach to train deep classifiers in NSL settings. Training under SL requires computing a Boolean formula $\phi$ encoding all the possible label vectors in ${\sigma^{-1}(s)}$ 
for each \nesy~training sample $(\mathbf{x}, s)$ 
and then computing the weighted model counting \citep{CHAVIRA2008772} of $\phi$ given the softmax scores of $f$. 
SL has been effective in several tasks, including visual question answering \citep{scallop}, video-to-text alignment \citep{scallop-journal}, and fine-tuning language models \citep{scallop-llms}, and has nice theoretical properties \citep{NeurIPS2023,reasoning-shortcuts}.
Due to its effectiveness, SL is now adopted by several NSL engines, such as DeepProbLog \citep{deepproblog-journal}, DeepProbLog's successors \cite{deepproblog-approx}, and Scallop \citep{scallop,scallop-journal}. 

Our empirical analysis only uses Scallop, since it is the only engine that provides a scalable SL implementation that can support our scenarios when $M \geq 3$. The computation of ${\sigma^{-1}(s)}$ is generally required by NSL techniques \citep{iclr2023,deepproblog-journal,ABL,neurasp}. This computation can become a bottleneck when the space of candidate label vectors grows exponentially, as in our MAX-$M$, SUM-$M$, and HWF-$W$ scenarios.
As also experimentally shown by \citep{neurolog,NeurIPS2023}, the \nesy~techniques from \citep{deepproblog-journal,deepproblog-approx,ABL,iclr2023,neurasp} either time out after several hours while trying to compute $\sigma^{-1}(s)$, or lead to deep classifiers of much worse accuracy than Scallop. 
So, Scallop was the only engine that could support our experiments, balancing runtime with accuracy.  

A further discussion about scalability issues in \nesy~can be found in Sections 3.2 and 6 at \citep{NeurIPS2023}.
}

\textbf{Additional scenarios.}
We additionally carried experiments with two other scenarios that have been widely used as \nesy~benchmarks, namely SUM-$M$ \citep{DBLP:journals/corr/abs-1907-08194,scallop} and HWF-$M$ \citep{iclr2023,scallop-journal}. 
SUM-$M$ is similar to MAX-$M$, however, instead of taking the maximum, we take the sum of the gold labels.  
The HWF-$M$ scenario\footnote{The benchmark is available at \url{https://liqing.io/NGS/}.} was introduced in \cite{hwf}. In this scenario, each training sample ${((x_1,\dots,x_M),s)}$ consists of a sequence ${(x_1,\dots,x_M)}$ of digits in $\{0,\dots,9\}$ and mathematical operators in ${\{+,-,*\}}$, corresponding to a valid mathematical expression, where $s$ is the result of the mathematical expression. 
As in SUM-$M$, the goal is to train a classifier to recognize digits and mathematical operators. Notice that this benchmark is not i.i.d. since only specific types of input sequences are valid.
The benchmark comes with a list of training samples. However, we created our own samples, to introduce imbalances in the distributions of the digits and operators. 

\textbf{Computational infrastructure.}
The experiments ran on an 64-bit Ubuntu 22.04.3 LTS machine with
Intel(R) Xeon(R) Gold 6130 CPU @ 2.10GHz, 3.16TB hard disk and an
NVIDIA GeForce RTX 2080 Ti GPU with 11264 MiB RAM. We used CUDA version 12.2. 

\textbf{Software packages.} Our source code was implemented in Python 3.9.
We used the following python libraries: \texttt{scallopy}\footnote{\url{https://github.com/scallop-lang/scallop} (MIT license).}, \texttt{highspy}\footnote{\url{https://pypi.org/project/highspy/} (MIT license).}, \texttt{or-tools}\footnote{\url{https://developers.google.com/optimization/} (Apache-2.0 license).}, \texttt{PySDD}\footnote{\url{https://pypi.org/project/PySDD/}  (Apache-2.0 license).}, \texttt{PyTorch} and \texttt{PyTorch vision}. Finally, we used part of the code\footnote{\url{https://github.com/MediaBrain-SJTU/RECORDS-LTPLL} (MIT license).} available at \citep{long-tailed-pll-dynamic-rebalancing} to implement \RECORDS~ and part of the code\footnote{\url{https://github.com/hbzju/SoLar}.} available at \citep{solar} to implement the sliding window approximation for marginal estimation.

\textbf{Classifiers.} For MAX-$M$ and SUM-$M$, we used the MNIST CNN also used in \citep{scallop,DBLP:journals/corr/abs-1907-08194}. 
For HWF-$M$, we used the CNN also used in \citep{iclr2023,scallop-journal}. 
For \SMALLESTPARENT, we used the ResNet model also used in \citep{solar,long-tailed-pll-dynamic-rebalancing}.

\textbf{Data generation.}
To create datasets for MAX-$M$, \SMALLESTPARENT, SUM-$M$, and HWF-$M$ we adopted the approach followed in previous work \citep{ABL,neurolog,NeurIPS2023,deepproblog-journal,scallop}. In particular, to create each training sample, 
we drew instances ${x_1, \dots, x_M}$ independently by MNIST or CIFAR-10. Then, we applied the function $\sigma$ over the gold labels ${y_1,\dots,y_M}$ to obtain the weak label $s$. 
To create samples for HWF-$M$, we followed similar steps to the above. However, to ensure that the input vectors of images represent a valid mathematical expression, 
we split the training instances into operators and digits,
drawing instances of digits for odd $i$s and instances of operators for even $i$s, for ${i \in [M]}$.
Before sample creation, the images in HWF were split into training and testing ones with ratio 70\%/30\%, 
as the benchmark does not offer such splits. 
As we state in Section~\ref{section:experiment}, to simulate long-tail phenomena (denoted as \textbf{LT}), we vary the imbalance ratio $\rho$ of the distributions of the input instances as in \citep{Label-Distribution-Aware-Margin-Loss,solar}: $\rho=0$ means that the hidden label distribution is unmodified and balanced. 
In each scenario, the test data follows the same distribution as the hidden labels in the training \nesy~data, e.g., when $\rho=0$, the test data is balanced; otherwise, it is imbalanced under the same $\rho$.

\textbf{Further details.}  
For the \SMALLESTPARENT~scenarios, we computed SL and \eqref{eq:multi-instance-pll-lp} using the whole pre-image of each weak label. For the MAX-$M$ scenarios, we only consider the top-1 proof \citep{NeurIPS2023} both when running Scallop and in \eqref{eq:multi-instance-pll-lp} as the space of pre-images is very large.   
For the \SMALLESTPARENT~benchmark, we created the hierarchical relations shown in Listing~\ref{listing:dp} based on the classes from CIFAR-10.

\textbf{Runtimes and memory consumption} 

Table~\ref{table:mmax-msum-runtime} reports the mean runtime per epoch in minutes for MAX-$M$ for $m_\:P = 2000$ and SUM-$M$ for $m_\:P = 1000$ for different batch sizes. 
Regarding MAX-$M$, the max memory consumption for the hardest case ($M=4$, $m_\:P = 2000$, and with batch size 512) is 1.74 GB for SL and 1.80 GB for \ILP{}. 
Regarding SUM-$M$, the max memory consumption for the hardest case ($M=6$, $m_\:P = 1000$, and with batch size 512) is 823 MB for SL and 1.07 GB for \ILP{}.

\begin{table}[tb]
    \begin{center}
    \caption{Mean runtime per epoch in minutes for MAX-$M$ for $m_\:P = 2000$ and 
    SUM-$M$ for $m_\:P = 1000$ for different batch sizes (enclosed in parentheses).
    In both cases, we form the LP based on the training samples in the current batch.}
    \label{table:mmax-msum-runtime}
    \resizebox{\textwidth}{!}{
    \begin{tabular}{l |c | c | c | c }
    \toprule
    Algorithm   &
    \multicolumn{2}{c|}{MAX-$M$}            & 
    \multicolumn{2}{c}{SUM-$M$}            \\
    & 
    $M=3$ & $M=4$ & 
    $M=5$ & $M=6$ \\ 
    \midrule
    SL(64)                 & 
    7.14   & 7.23   & %
    5.59   & 5.48   \\ %
    SL(256)                 & 
    2.26   & 2.12   & %
    0.43   & 0.57   \\ %
    SL(512)                 & 
    1.42   & 1.32   & %
    0.44   & 0.55   \\ %
    \bottomrule 
    \end{tabular}
    \quad
    \begin{tabular}{l |c | c | c | c }
    \toprule
    Algorithm   &
    \multicolumn{2}{c|}{MAX-$M$}            & 
    \multicolumn{2}{c}{SUM-$M$}            \\
    & 
    $M=3$ & $M=4$ & 
    $M=5$ & $M=6$ \\ 
    \midrule
    LP(64)                 & 
    7.14   & 8.21   & %
    6.06   & 6.28   \\ %
    LP(256)                 & 
    3.12   & 3.28   & %
    3.11   & 4.07   \\ %
    LP(512)                 & 
    4.01   & 5.42   & %
    6.00   & 7.40   \\ %
    \bottomrule 
    \end{tabular}
    }
    \end{center}
\end{table}
 
\textbf{Tables and plots.}
To assess the robustness of our techniques, 
we focus on scenarios with high imbalances, a large number of input instances, and few \nesy~training samples.  
Table~\ref{table:msum-full} shows the results for SUM-$M$, for ${M \in \{5,6,7\}}$, $\rho=\{50,70\}$, and $m_{\:P}=2000$. 
Table~\ref{table:hwf1000-full} shows results for the same experiment, but $m_{\:P}=1000$. 
In Tables~\ref{table:hwf1000-full}, \ILP(\GOLD) refers to running \ILP~using the gold ratios-- Algorithm~\ref{alg:solver} cannot be applied, as the data is not i.i.d. in this scenario.
Table~\ref{table:hwf1000-full} focuses on training time mitigation. \RECORDS~ was not considered, as it led to substantially lower accuracy in the MAX-$M$ and \SMALLESTPARENT~scenarios.    
Figure~\ref{fig:ratios-full} shows the marginal estimates computed by Algorithm~\ref{alg:solver} for different scenarios. 
Last, Table~\ref{table:mmax-full} presents all the results for the MAX-$M$ scenarios. The tables follow the same notation with the ones in the main body of the paper.  

\textbf{Conclusions.}  
The conclusions we can draw from Tables~\ref{table:msum-full}, \ref{table:hwf1000-full}, and 
Figure~\ref{fig:ratios-full} are very similar to the ones drawn in the main body of our paper. 
When \ILP~is adopted jointly with the estimates obtained by Algorithm~\ref{alg:solver}, we can see that the accuracy improvements are substantial on multiple occasions. 
For example, in SUM-6 with $\rho=50$, the accuracy of classification increases from 
67\% under SL to 80\% under \ILP(\GOLD); in HWF-7 with $\rho=15$, classification accuracy increases from 37\% under SL to 41\% under \ILP(\GOLD). 
We argue that this is due to the low quality of the empirical estimates of $\_r$, a phenomenon that gets magnified due to the adopted approximations-- recall that we run for SL and \ILP~using the top-1 proofs, in order to make the computation tractable. 
The lower accuracy of \ILP(\GOLD) for SUM-7 and $\rho=70$ is attributed to the fact that the marginal estimates computed by Algorithm~\ref{alg:solver} diverge from the gold ones -- see Figure~\ref{fig:ratios-full}. 
In fact, computing marginals for this scenario is particularly challenging due to the very large pre-image of $\sigma$ when $M=7$, 
the high imbalance ratio ($\rho=70$), and the small number of \nesy~samples ($m_{\:P}=2000$). 
Table~\ref{table:hwf1000-full} also suggests that \solar's empirical ratio estimation technique may harm the accuracy of our \ILP-based formulation, supporting a claim that we also made in the main body of the paper, namely that \textit{the computation of the marginals for training time mitigation is an important direction for future research.} 

Figure~\ref{fig:ratios-full} shows the robustness of Algorithm~\ref{alg:solver} in computing marginals. 
\mychange{Figure~\ref{figure:class-specific-accuracies} 
shows the hidden label ratios and the corresponding class-specific classification accuracies
under the MAX-$M$ and the Smallest Parent scenarios for $\rho=50$.} 

\begin{table}[tb]
    \begin{center}
    \caption{Experimental results for SUM-$M$ using $m_\:P = 2000$.}
    \label{table:msum-full}
    \resizebox{\textwidth}{!}{
    \begin{threeparttable}
    \begin{tabular}{l |c | c | c | c | c | c}
    \toprule

    \multirow{2}{*}{\textbf{Algorithms}}                    &
    \multicolumn{3}{c|}{\textbf{LT} $\rho = 50$}            & 
    \multicolumn{3}{c}{\textbf{LT} $\rho = 70$}            \\
    & 
    $M=5$ & $M=6$ & $M=7$ & 
    $M=5$ & $M=6$ & $M=7$ \\ 
    \midrule
    SL                & 
    82.28 $\pm$ 15.87   & 67.60 $\pm$ 13.43   & 88.42 $\pm$ 15.66  & %
    85.43 $\pm$ 11.49   & 85.60 $\pm$ 12.36   & 79.05 $\pm$ 13.31  \\ %
    $~~~$ + LA        & 
    81.74 $\pm$ 16.27    & 67.04 $\pm$ 13.27  & 78.33 $\pm$ 15.61  &  %
    85.38 $\pm$ 11.58    & 85.47 $\pm$ 12.49  & 68.95 $\pm$ 12.91  \\ %
    $~~~$ + \CAROT       & 
    82.21 $\pm$ 15.94    & 68.82 $\pm$ 12.61  & 79.54 $\pm$ 14.46  &  %
    86.12 $\pm$ 11.80    & 85.47 $\pm$ 12.37  & 76.08 $\pm$ 7.70 \\  %
    \midrule
    \ILP(\GOLD)            &  
    89.86 $\pm$ 8.54    & 80.10 $\pm$ 18.45  & 87.94 $\pm$ 10.72  &
    91.64 $\pm$ 7.62    & 91.52 $\pm$ 7.24   & 63.79 $\pm$ 12.97  \\
    $~~~$ + LA       & 
    89.72 $\pm$ 8.68    & 79.43 $\pm$ 19.15  & 87.61 $\pm$ 11.05  & 
    91.66 $\pm$ 7.60    & 91.52 $\pm$ 7.24   & 63.70 $\pm$ 12.87  \\
    $~~~$ + \CAROT   & 
    89.14 $\pm$ 9.16    & 78.85 $\pm$ 19.55  & 77.74 $\pm$ 19.69  & 
    91.29 $\pm$ 7.86    & 91.97 $\pm$ 6.80   & 67.06 $\pm$ 9.78  \\
    \bottomrule 
    \end{tabular}
    \end{threeparttable}
    }
    \end{center}

    \begin{center}
    \caption{Experimental results for HWF-$M$ using $m_\:P = 1000$.}
    \label{table:hwf1000-full}
    \resizebox{0.5\textwidth}{!}{
    \begin{threeparttable}
    \begin{tabular}{l | c | c | c }
    \toprule

    \multirow{2}{*}{\textbf{Algorithms}}                     &
    \multicolumn{3}{c}{\textbf{LT} $\rho = 15$}            \\ %
    & 
    $M=3$ & $M=5$ & $M=7$ \\
    \midrule
    SL                & 
    94.01 $\pm$ 0.49      & 95.34 $\pm$ 0.14    & 48.23 $\pm$ 6.91  \\ %
    \midrule
    \ILP (\EMP)             &
    84.27 $\pm$ 10.01  & 84.86 $\pm$ 10.80   & 50.90 $\pm$ 12.17    \\   %
    \midrule
    \ILP(GOLD)            & 
    94.39 $\pm$ 0.27     & 95.72 $\pm$ 0.34    & 55.73 $\pm$ 6.12  \\  %
    \bottomrule 
    \end{tabular}
    \end{threeparttable}
    }
    \end{center}
\end{table}

\begin{figure*}[t]
    \begin{adjustbox}{max width=\textwidth}
        \begin{tabular}{ccc}
            {\includegraphics[width = 0.45\textwidth]{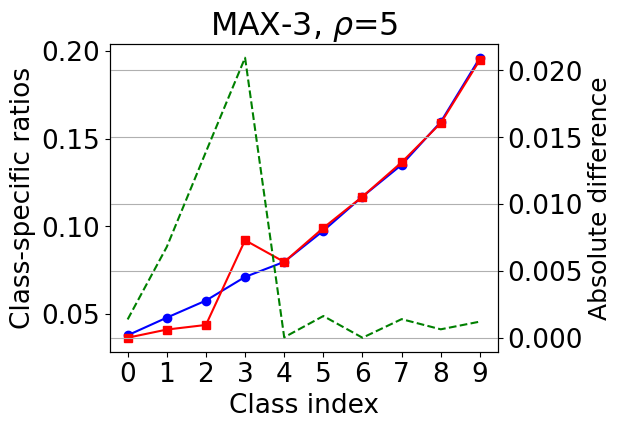}} &
            {\includegraphics[width = 0.45\textwidth]{images/mmax-M=3-r=15.png}} &
            {\includegraphics[width = 0.45\textwidth]{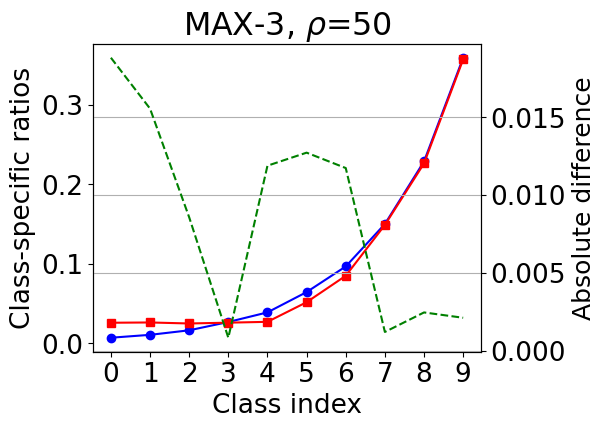}} \\

            {\includegraphics[width = 0.45\textwidth]{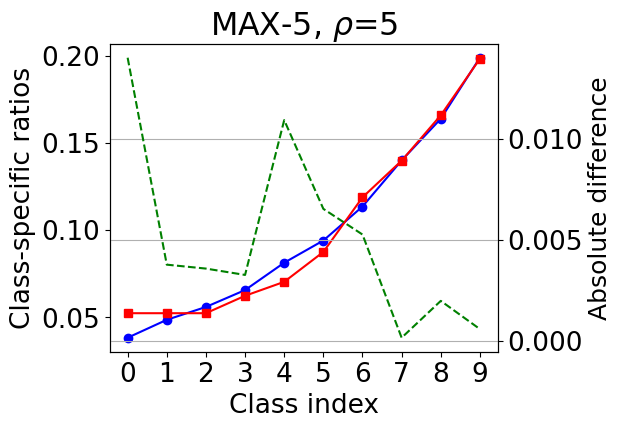}} &
            {\includegraphics[width = 0.45\textwidth]{images/mmax-M=5-r=15.png}} &
            {\includegraphics[width = 0.45\textwidth]{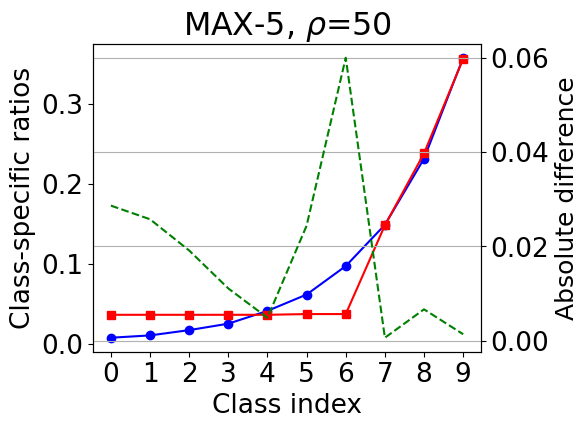}} \\

            {\includegraphics[width = 0.45\textwidth]{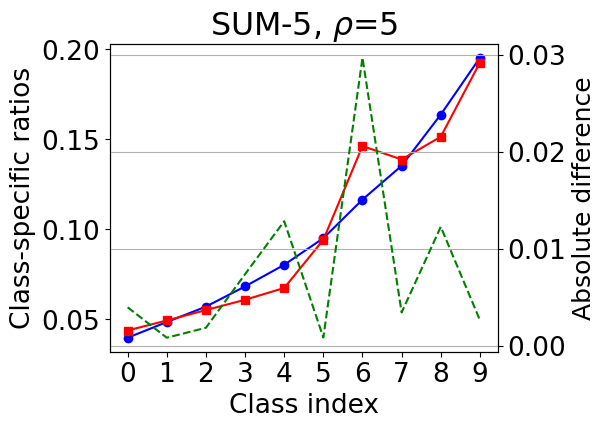}} &
            {\includegraphics[width = 0.45\textwidth]{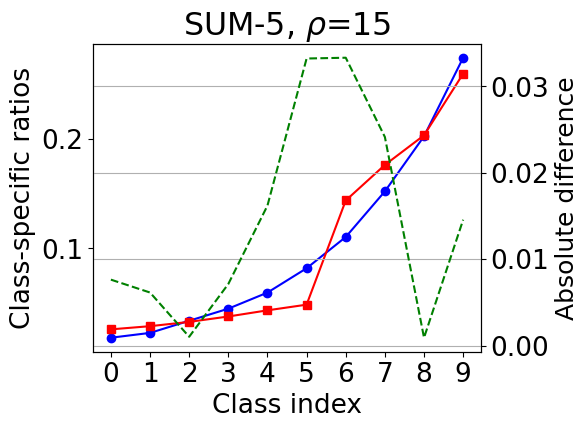}} &
            {\includegraphics[width = 0.45\textwidth]{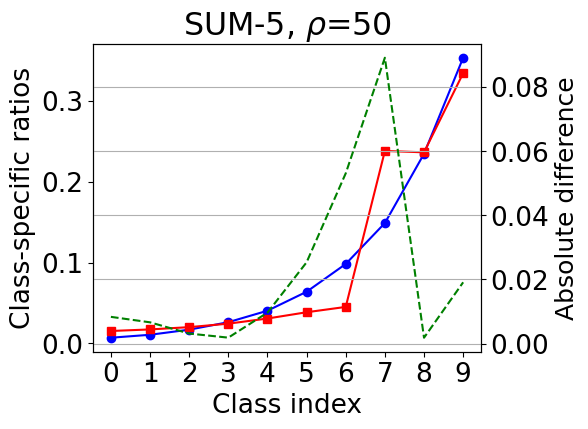}} \\

            {\includegraphics[width = 0.45\textwidth]{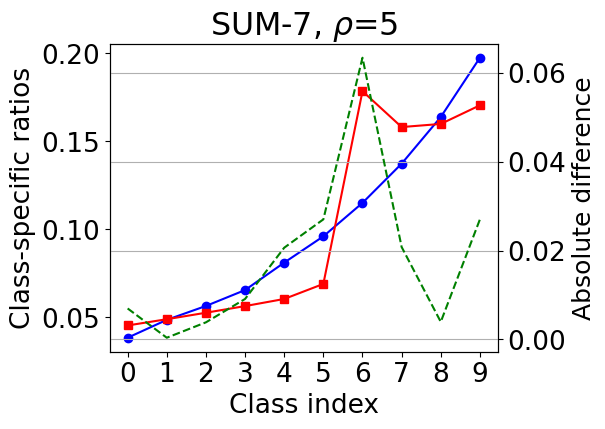}} &
            {\includegraphics[width = 0.45\textwidth]{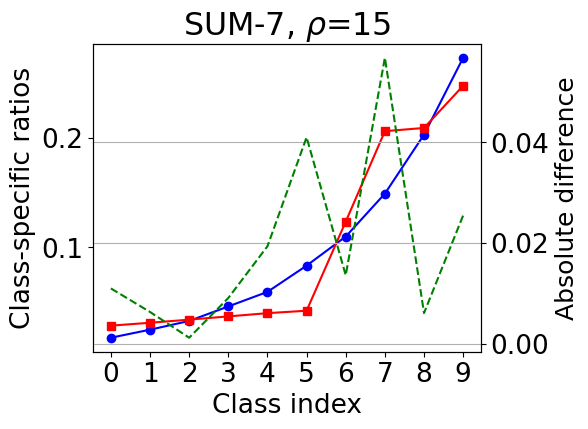}} &
            {\includegraphics[width = 0.45\textwidth]{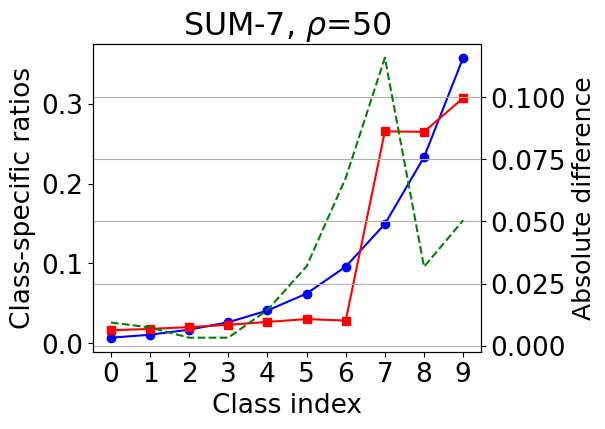}} \\

            {\includegraphics[width = 0.45\textwidth]{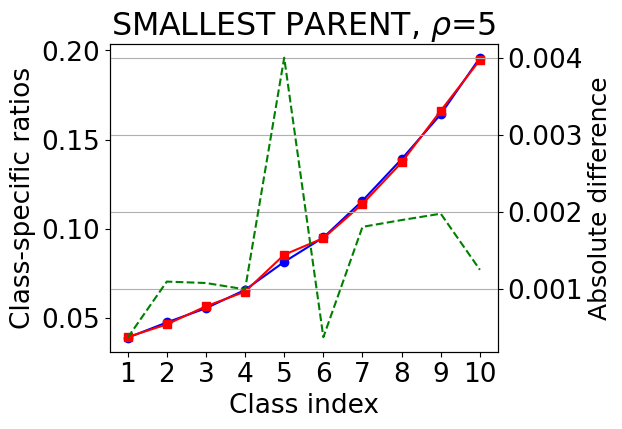}} &
            {\includegraphics[width = 0.45\textwidth]{images/smallest-parent-r=15.png}} &
            {\includegraphics[width = 0.45\textwidth]{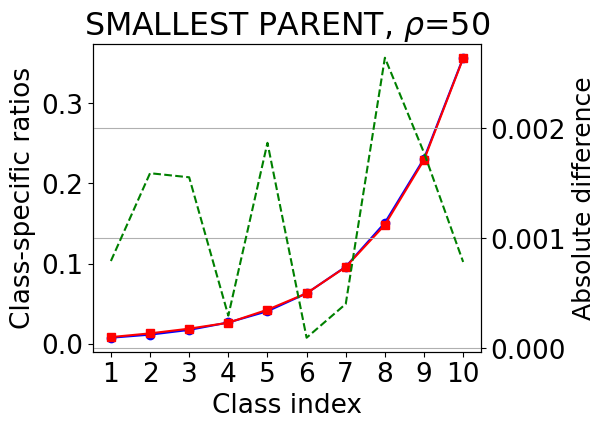}} \\
        \end{tabular}
    \end{adjustbox}
    \caption{Accuracy of the marginal estimates computed by Algorithm~\ref{alg:solver} for different scenarios. \textcolor{blue}{Blue} denotes the gold ratios, \textcolor{red}{red} the estimated ones, and \textcolor{green}{green} the absolute difference between the gold and estimated ratios.}
    \label{fig:ratios-full}
\end{figure*}

\begin{figure}[tb]
    \centering
    \scalebox{1.1}{
        \begin{tabular}{cc}
    {
\begin{tikzpicture}[xscale=0.25, yscale=0.25]
\pgfplotsset{compat=newest}
\begin{axis}[
    width=15cm,
    height=15cm,
    xlabel={\Huge MNIST classes},
    ylabel={\Huge Hidden label ratios $\_r$},
    tick label style={font=\Huge},
    xmin=0, xmax=9,
    ymin=0, ymax=100,
    xtick={0,1,2,3,4,5,6,7,8,9},
    xticklabels={0,1,2,3,4,5,6,7,8,9},   %
    ytick={0,20,...,100},
    title={\Huge MAX-5, $\rho=50$}
            ]
\addplot[mark=*,black] plot coordinates {
(0,0)
(1,0)
(2,0)
(3,0)
(4,0)
(5,0.07)
(6,0.07)
(7,0.7)
(8,8.9)
(9,90)
};
\end{axis}
\end{tikzpicture}
} & {
\begin{tikzpicture}[xscale=0.25, yscale=0.25]
\pgfplotsset{compat=newest}
\begin{axis}[
    width=15cm,
    height=15cm,
    xlabel={\Huge MNIST classes},
    ylabel={\Huge \% Per-class classification accuracy},
    tick label style={font=\Huge},
    xmin=0, xmax=9,
    ymin=0, ymax=100,
    xtick={0,1,2,3,4,5,6,7,8,9},
    xticklabels={0,1,2,3,4,5,6,7,8,9},   %
    ytick={0,20,...,100},
    title={\Huge MAX-5, $\rho=50$},
    legend style={at={(0.4,.9)},anchor=north east},
            ]
\addplot[mark=*,green] plot coordinates {
(0,0)
(1,0)
(2,13.33)
(3,0)
(4,0)
(5,0)
(6,0)
(7,0)
(8,99.28)
(9,93.35)
};
\addlegendentry{\Huge SL}

\addplot[mark=*,blue] plot coordinates {
(0,0)
(1,0)
(2,43.33)
(3,0)
(4,0)
(5,0)
(6,2.29)
(7,42.80)
(8,97.61)
(9,96.91)
};
\addlegendentry{\Huge ILP(ALG1)}
\end{axis}
\end{tikzpicture}
} \\    
    {
\begin{tikzpicture}[xscale=0.25, yscale=0.25]
\pgfplotsset{compat=newest}
\begin{axis}[
    width=15cm,
    height=15cm,
    xlabel={\Huge CIFAR10 class indices},
    ylabel={\Huge Hidden label ratios $\_r$},
    tick label style={font=\Huge},
    xmin=0, xmax=9,
    ymin=0, ymax=100,
    xtick={0,1,2,3,4,5,6,7,8,9},
    xticklabels={1,2,3,4,5,6,7,8,9,10},   %
    ytick={0,20,...,100},
    title={\Huge Smallest Parent, $\rho=50$}
]
\addplot[mark=*,black] plot coordinates {
(0,35.17)
(1,23.59)
(2,15.32)
(3,9.55)
(4,6.24)
(5,4.11)
(6,2.66)
(7,1.58)
(8,1.11)
(9,0.69)
};
\end{axis}
\end{tikzpicture}
} & {
\begin{tikzpicture}[xscale=0.25, yscale=0.25]
\pgfplotsset{compat=newest}
\begin{axis}[
    width=15cm,
    height=15cm,
    xlabel={\Huge CIFAR10 class indices},
    ylabel={\Huge \% Per-class classification accuracy},
    tick label style={font=\Huge},
    xmin=0, xmax=9,
    ymin=0, ymax=100,
    xtick={0,1,2,3,4,5,6,7,8,9},
    xticklabels={1,2,3,4,5,6,7,8,9,10},   %
    ytick={0,20,...,100},
    title={\Huge Smallest Parent, $\rho=50$}
]
\addplot[color=green,mark=x]
    plot coordinates {
(0,92.20)
(1,91.96)
(2,66.83)
(3,60.89)
(4,45.14)
(5,10.62)
(6,23.29)
(7,0.0)
(8,3.33)
(9,10.00)
    };
\addlegendentry{\Huge SL}

\addplot[color=blue,mark=x]
    plot coordinates {
(0,95.00)
(1,96.60)
(2,79.71)
(3,69.74)
(4,66.29)
(5,47.79)
(6,54.79)
(7,38.30)
(8,40.00)
(9,40.00)
    };
\addlegendentry{\Huge ILP(ALG1)}
\end{axis}
\end{tikzpicture}
} \\
        \end{tabular}}
    \caption{\mychange{(Up left) hidden label ratios $\_r$ for MAX-5 with $\rho=50$. (Up right) Class-specific classification accuracies under SL and ILP(ALG1) for MAX-5 with $\rho=50$.
    (Down left) hidden label ratios $\_r$ for Smallest parent with $\rho=50$. (Down right) Corresponding class-specific classification accuracies under SL and ILP(ALG1) for Smallest parent with $\rho=50$.}
    }\label{figure:class-specific-accuracies}
\end{figure}

\begin{sidewaystable}
    \begin{center}
    \caption{Experimental results for MAX-$M$ using $m_\:P = 3000$.}
    \label{table:mmax-full}
    \resizebox{\textwidth}{!}{
    \begin{threeparttable}
    \begin{tabular}{l | c | c | c | c | c | c | c | c | c | c | c | c}
    \toprule

    \multirow{2}{*}{\textbf{Algorithms}}                    &
    \multicolumn{3}{c|}{\textbf{Original} $\rho = 0$}       &
    \multicolumn{3}{c|}{\textbf{LT} $\rho = 5$}             & 
    \multicolumn{3}{c|}{\textbf{LT} $\rho = 15$}            & 
    \multicolumn{3}{c}{\textbf{LT} $\rho = 50$}            \\
    & 
    $M=3$ & $M=4$ & $M=5$ & 
    $M=3$ & $M=4$ & $M=5$ & 
    $M=3$ & $M=4$ & $M=5$ & 
    $M=3$ & $M=4$ & $M=5$ \\ 
    \midrule
    SL                & 
    84.15 $\pm$ 11.92  & 73.82 $\pm$ 2.36  & 59.88 $\pm$ 5.58  & 
    55.48 $\pm$ 23.23  & 66.24 $\pm$ 1.22  & 55.13 $\pm$ 4.20  &
    71.25 $\pm$ 4.48   & 66.98 $\pm$ 3.2   & 55.06 $\pm$ 5.21  & 
    66.74 $\pm$ 5.42   & 67.71 $\pm$ 11.58 & 55.74 $\pm$ 2.58  \\
    $~~~$ + LA        & 
    84.17 $\pm$ 11.95  & 73.82 $\pm$ 2.36  & 59.88 $\pm$ 5.58  & 
    55.48 $\pm$ 23.23  & 65.63 $\pm$ 1.75  & 55.13 $\pm$ 4.20  &
    70.80 $\pm$ 4.52   & 66.98 $\pm$ 3.20  & 54.53 $\pm$ 5.74  & 
    66.57 $\pm$ 5.09   & 61.10 $\pm$ 3.95  & 52.47 $\pm$ 8.06  \\
    $~~~$ + \CAROT       & 
    84.57 $\pm$ 11.50  & 73.08 $\pm$ 3.10  & 60.26 $\pm$ 5.20  &
    56.52 $\pm$ 21.70  & 66.70 $\pm$ 0.76  & 55.91 $\pm$ 3.42  &
    74.95 $\pm$ 3.45   & 67.44 $\pm$ 2.74  & 55.80 $\pm$ 4.47  &
    68.16 $\pm$ 4.00   & 68.25 $\pm$ 6.14  & 57.29 $\pm$ 14.17 \\
    \midrule
    RECORDS             &  
    85.56 $\pm$ 7.25   & 75.11 $\pm$ 0.77   & 59.43 $\pm$ 6.61  & 
    77.98 $\pm$ 3.13   & 65.85 $\pm$ 0.62   & 55.07 $\pm$ 4.24  &
    55.47 $\pm$ 20.45  & 53.34 $\pm$ 16.66  & 52.40 $\pm$ 7.95  & 
    70.20 $\pm$ 7.65   & 66.05 $\pm$ 13.90  & 59.93 $\pm$ 4.86  \\
    $~~~$ + LA        &
    87.63 $\pm$ 5.11  & 75.11 $\pm$ 0.77  & 59.28 $\pm$ 6.76  &
    77.98 $\pm$ 3.13  & 65.43 $\pm$ 0.87  & 54.40 $\pm$ 4.44  &
    54.90 $\pm$ 20.16 & 54.46 $\pm$ 15.54 & 51.25 $\pm$ 9.09  & 
    70.09 $\pm$ 7.26  & 65.78 $\pm$ 14.18 & 59.93 $\pm$ 4.86  \\
    $~~~$ + \CAROT    & 
    90.97 $\pm$ 2.03  & 75.94 $\pm$ 0.91  & 60.45 $\pm$ 7.78  &
    78.31 $\pm$ 4.00  & 67.57 $\pm$ 1.74  & 55.46 $\pm$ 3.94  &
    54.32 $\pm$ 21.85 & 62.74 $\pm$ 8.14  & 55.85 $\pm$ 4.61  &
    71.46 $\pm$ 6.4   & 71.25 $\pm$ 8.70  & 63.64 $\pm$ 5.92  \\
    \midrule
    \ILP (\EMP)             &
    94.97 $\pm$ 1.32  & 77.86 $\pm$ 4.22  & 55.27 $\pm$ 11.27  &
    80.15 $\pm$ 1.69  & 70.73 $\pm$ 1.85  & 56.28 $\pm$ 2.03  &
    75.83 $\pm$ 5.26  & 69.67 $\pm$ 5.47  & 59.25 $\pm$ 7.27   &
    77.16 $\pm$ 3.46  & 70.06 $\pm$ 10.73 & 56.79 $\pm$ 1.58   \\
    $~~~$ + LA        & 
    94.69 $\pm$ 1.60  & 77.91 $\pm$ 4.16  & 55.34 $\pm$ 11.19  &
    80.08 $\pm$ 1.55  & 70.54 $\pm$ 1.82  & 55.31 $\pm$ 3.27  &
    75.77 $\pm$ 5.32  & 68.92 $\pm$ 3.96  & 58.49 $\pm$ 5.74   &
    77.1 $\pm$ 3.52   & 69.76 $\pm$ 10.31 & 56.81 $\pm$ 1.56   \\
    $~~~$ + \CAROT    & 
    95.07 $\pm$ 1.20  & 75.53 $\pm$ 7.42  & 53.07 $\pm$ 12.99  &
    80.29 $\pm$ 2.33  & 70.88 $\pm$ 2.22  & 57.85 $\pm$ 4.05  & 
    76.38 $\pm$ 4.72  & 69.74 $\pm$ 5.51  & 59.56 $\pm$ 8.14   &
    77.58 $\pm$ 3.04  & 70.11 $\pm$ 10.34 & 57.09 $\pm$ 1.90   \\
    \midrule
    \ILP(\GOLD)            & 
    96.09 $\pm$ 0.41  & 78.34 $\pm$ 4.80  & 59.91 $\pm$ 6.63  &
    78.56 $\pm$ 1.52  & 69.71 $\pm$ 0.03  & 57.61 $\pm$ 3.09  & 
    74.51 $\pm$ 9.13  & 69.14 $\pm$ 1.82  & 56.81 $\pm$ 3.74  &
    72.23 $\pm$ 11.49 & 69.28 $\pm$ 11.78 & 63.67 $\pm$ 7.04  \\
    $~~~$ + LA       & 
    95.81 $\pm$ 0.74  & 78.97 $\pm$ 4.09  & 59.98 $\pm$ 6.56  &
    78.48 $\pm$ 1.53  & 69.71 $\pm$ 0.03  & 57.47 $\pm$ 3.09  & 
    74.26 $\pm$ 9.06  & 68.73 $\pm$ 2.23  & 56.37 $\pm$ 3.13  & 
    72.23 $\pm$ 11.49 & 69.21 $\pm$ 11.86 & 63.67 $\pm$ 7.04  \\
    $~~~$ + \CAROT   & 
    96.13 $\pm$ 0.38  & 80.78 $\pm$ 2.36  & 59.71 $\pm$ 6.35  &
    78.93 $\pm$ 1.85  & 70.32 $\pm$ 0.86  & 57.62 $\pm$ 3.08  &
    77.05 $\pm$ 7.00  & 69.19 $\pm$ 1.81  & 59.76 $\pm$ 7.24  & 
    74.82 $\pm$ 10.18 & 74.30 $\pm$ 7.54  & 64.39 $\pm$ 6.43  \\
    \bottomrule 
    \end{tabular}
    \end{threeparttable}
    }
    \end{center}
\end{sidewaystable}

\begin{table*}[tb]
    \centering
    \mychange{
    \caption{The notation in the preliminaries and the theoretical analysis.}\label{table:notation:prelims-theory}
    \begin{tabular}{|l|p{9cm}|}
        \multicolumn{2}{c}{Supervised learning} \\
        \hline
        $\-1\{\cdot\}$ & Indicator function \\
        $[n] := \{1,\dots,n\}$ & Set notation \\
        $\mathcal{X}$, $\mathcal{Y} = [c]$  & Input instance space and label space  \\
        $x,y$ & Elements from $\mathcal{X}$ and $\mathcal{Y}$ \\
        $X,Y$ & Random variables over $\mathcal{X}$ and $\mathcal{Y}$ \\
        $\+D$, $\+D_X$, $\+D_Y$ & Joint distribution of $(X,Y)$ and marginals of $X$ and $Y$ \\
        $r_j = \mathbb{P}(Y = j)$ & probability of occurrence (or ratio) of label $j \in \+Y$ in $\+D$ \\
        $\+D_Y := \mathbf{r} = (r_1,\dots, r_c) $ & Marginal of $Y$ \\
        $\Delta_c$ & Space of probability distributions over $\+Y$ \\
        ${f: \+X \rightarrow \Delta_c}$ & Scoring function \\
        ${f^j(x)}$  & Score of $f$ upon $x$ for class $j \in \+Y$\\
        $[f]: \mathcal{X} \rightarrow \mathcal{Y}$ & Argmax classifier induced by $f$ \\
        $\mathcal{F}$, $[\mathcal{F}]$ & Space of scoring functions and corresponding space of classifiers \\ 
        $d_{[\+F]}$ & Natarajan dimension of $[\+F]$ \\
        $L(y', y) := \-1\{y' \ne y\}$ & Zero-one loss given $y,y' \in \mathcal{Y}$ \\
        $R(f)$ & Zero-one risk of $f$ \\
        ${R_j(f):= \-P([f](x) \ne j|Y = j)}$ & {Risk} of $f$ for the $j$-th class in $\+Y$ \\
        $D(\_A)$ & The diagonal matrix that shares the same diagonal with square matrix $\_A$ \\
        \hline
        \multicolumn{2}{c}{\nesy} \\ \hline
        $M > 0$ & Number of input instances per \nesy~sample \\ 
        $\_x = (x_1, \dots, x_M)$, $\_y = (y_1,\dots, y_M)$ & Vector of input instances and their (hidden) gold label \\
        $\+S = \{a_1,\dots, a_{c_S}\}$ & Space of $c_S$ weak labels \\
        $S$ & Random variable over $\+S$ \\
        ${\sigma: \+Y^M \rightarrow \+S}$ &  Symbolic component (known to the learner) \\
        $s = \sigma(\_y)$ & Weak label \\
        $\sigma^{-1}(s)$ & Pre-image of $s$, i.e., set of all vectors $\_y \in \+Y^M$ s.t. $\sigma(\_y) = s$\\
        ${(\_x,s)}$ & \nesy~sample \\
        $\+D_\:P$ & Distribution of \nesy~samples over $\+X^M \times \+S$\\
        ${{\+D_{\:P_S}}}$ & Marginal of $S$\\ 
        $\mathcal{T}_\:P$ & Set of $m_\:P$ \nesy~samples \\
        $[f](\_x)$ & Short for $([f](x_1), \dots, [f](x_M))$ \\
        ${L_{\sigma}(\_y,s) := L(\sigma(\_y),s)}$ & {Zero-one partial loss subject to $\sigma$} \\
        $R_\:P(f;\sigma) := \-E_{(X_1,\dots,X_M,S) \sim \+D_\:P}[L_{\sigma}(([f](\_X)),S)]$ & {Zero-one partial risk subject to $\sigma$} \\
        $\widehat{R}_\:P(f; \sigma, \+T_\:P)$ & {Empirical zero-one partial risk subject to $\sigma$ given set $\mathcal{T}_\:P$ of \nesy~samples} \\
        \hline 
        \multicolumn{2}{c}{Notation in Section~\ref{section:theory}} \\ \hline  
        $\mathbf{1}_n$, $\mathbf{0}_n$ & All-one and all-zero vectors \\
        $\mathbf{I}_n$ & Identity matrix of size $n \times n$ \\
        $\_e_j$ & $c$-dimensional one-hot vector, where the $j$-th element is one \\
        $\_H(f)$ & $c \times c$ matrix where the ${(i,j)}$ cell is the probability of $f$ classifying an instance with label $i \in \+Y$ to $j \in \+Y$. \\
        $\_h(f) := \mathrm{vec}(\_H(f))$ & Vectorization of $\_H(f)$ \\
        ${\_w_j := \mathrm{vec}(\_W_j)}$ & Vectorization of matrix ${\_W_j := (\_1_c - \_e_j) \_e_j^\top}$, where $j \in \+Y$ \\
       $\_\Sigma_{\sigma,\_r}$ &  Symmetric matrix in $\-R^{c^2 \times c^2}$ depending on $\sigma$ and $\_r$ \\
       ${\Phi_{\sigma, j}({R}_\:P(f;\sigma))}$ & Optimal solution to program \eqref{eqn:optim} and upper bound to $R_j(f)$ \\ 
       $\widetilde{R}_\:P(f; \sigma, \+T_\:P, \delta)$ & Generalization bound of $R_\:P(f;\sigma)$ for probability $1-\delta$
        \\ \hline
    \end{tabular}
    }
\end{table*}

\begin{table*}[tb]
    \centering
    \mychange{
    \caption{The notation used in our proposed algorithms.}\label{table:notation-algorithms}
    \begin{tabular}{|l|p{10cm}|}
        \multicolumn{2}{c}{Notation in Section~\ref{section:gold-marginal}} \\ \hline
        $p_j := \mathbb{P}(S = a_j)$ & Probability of occurrence (or ratio) of $a_j \in \+S$ in $\+D_\:P$ \\
        $P_{\sigma}$ & System of polynomials ${[p_j]_{j \in [c_S]}^{\top} = [\sum_{(y_1, \dots, y_M) \in \sigma^{-1}(a_j)}]_{j \in [c_S]}^{\top}}$ \\
        $\Psi_\sigma$ & Mapping of each $r_j \in \+Y$ to its solution in $P_{\sigma}$, assuming $\_p$ is known \\
        $\hat{\_r}$, $\hat{\_p}$ & Estimates of $\_r$ and $\_p$ \\
        ${\bar{p}_j := \sum_{k=1}^{m_\:P}\mathbbm{1}\{s_k = a_j\} / m_\:P}$ & Estimate of 
        $p_j$ given \nesy~dataset $\+T_\:P$ \\ \hline
        \multicolumn{2}{c}{Notation in Section~\ref{section:ilp}} \\ \hline
        $n > 0$ & Size of each batch of \nesy~samples \\
        $i$ & Index over $[M]$ \\
        $j$ & Index over $[c]$ \\
        $\ell$ & Index over $[n]$ \\
        ${(x_{\ell,1},\dots,x_{\ell,M}, s_\ell)}$ & $\ell$-th \nesy~training sample in the input batch \\
        $R_\ell$ & Size of $\sigma^{-1}(s_\ell)$\\
        $t$ & Index over $[R_{\ell}]$ \\ 
        ${\_P_i}$ & Matrix in ${[0,1]^{n \times c}}$, where ${P_i[\ell,j] = f^j(x_{\ell,i})}$ \\
        ${\_Q_i}$ & Matrix in ${[0,1]^{n \times c}}$, where ${Q_i[\ell,j]}$ is the pseudo-label assigned with label $j \in \+Y$ for instance $x_{\ell,i}$ \\
        $q_{\ell,i,j}$ & A Boolean variable that is true if $x_{\ell,i}$ is assigned with label $j \in \+Y$ and false otherwise\\
        $\varphi_{\ell,t}$ & Conjunction over the $q_{\ell,i,j}$ Boolean variables that encodes the $t$-th label vector in $\sigma^{-1}(s_\ell)$\\
        $\Phi_{\ell} = \varphi_{\ell,1} \vee \dots \vee \varphi_{\ell,R_\ell}$ & DNF formula encoding the label vectors in $\sigma^{-1}(s_\ell)$ \\
        $\alpha_{\ell,t}$ & A fresh Boolean variable associated with each $\varphi_{\ell,t}$ by the Tseytin transformation \\
        \hline
        \multicolumn{2}{c}{Notation in Section~\ref{section:carot}} \\ \hline
        $n > 0$ & Size of each batch of test input instances from $\+X$ \\
        $\_P$ & Matrix in $\-R^{n \times c}$ of the $f$'s scores on the test instances of the input batch \\
        $\_P'$ & Matrix in $\-R^{n \times c}$ storing the \CAROT's adjusted scores for $\_P$ 
        \\ 
        $H(\_P')$ & Entropy of $\_P'$ \\
        $\eta, \tau > 0$ & Parameters of {robust semi-constrained optimal transport} problem \citep{robust-ot} \\
        \hline
    \end{tabular}
    }
\end{table*}

\end{document}